\newcommand{\D}{\nabla}
\newcommand{\R}{\mathbb{R}}
\newcommand{\phat}{\hat{p}}
\newcommand{\regret}{\operatorname{Regret}}
\newcommand{\Mix}{\operatorname{Mix}}
\newcommand{\inner}[1]{\left\langle#1\right\rangle}
\newcommand{\E}[2]{\mathbb{E}_{#1}\left[#2\right]}
\newcommand{\Lag}{\mathcal{L}}
\newcommand{\ones}{\mathds{1}}
\newcommand{\interior}{\operatorname{int}}
\newcommand{\ri}{\operatorname{ri}}
\newcommand{\DT}{\Delta_\Theta}
\newcommand{\DX}{\Delta_X}
\newcommand{\acts}{\Ac}
\newcommand{\act}{a}
\newcommand{\acth}{\hat{\act}}
\newcommand{\lossx}{\ell}
\newcommand{\tlossx}{L}
\newcommand{\tsallis}{S}
\newcommand{\renyi}{R}
\newcommand{\Ac}{\mathcal{A}}
\DeclareMathOperator*{\argmin}{\arg\min}
\newcommand{\eg}{\textit{e.g.}}
\newcommand{\ie}{\textit{i.e.}}
\newtheorem{definition}{Definition}
\newtheorem{conjecture}{Conjecture}
\newtheorem{theorem}{Theorem}
\newtheorem{lemma}{Lemma}
\newtheorem{example}{Example}
\title{Generalized Mixability via Entropic Duality}
\author{
	Mark D. Reid\\
	Australian National University \& NICTA\\
	\and
	Rafael M. Frongillo\\
	Microsoft Research\\
	\and
	Robert C. Williamson\\
	Australian National University \& NICTA\\
	\and
	Nishant Mehta\\
	NICTA\\
}
\begin{document}

\maketitle

\begin{abstract}
	Mixability is a property of a loss which characterizes when fast
	convergence is possible in the game of prediction with expert
	advice. We show that a key property of mixability generalizes, and
	the exp and log operations present in the usual theory are not as
	special as one might have thought.
	In doing this we introduce a
	more general notion of $\Phi$-mixability where $\Phi$ is a general
	entropy (\ie, any convex function on probabilities). We show how a property 
	shared by the convex dual of any such entropy yields a natural
	algorithm (the minimizer of a regret bound) which, analogous to the
	classical aggregating algorithm, is guaranteed a constant regret
	when used with $\Phi$-mixable losses.
	We characterize precisely which $\Phi$ have $\Phi$-mixable losses
	and put forward a number of conjectures about the optimality and
	relationships between different choices of entropy.
\end{abstract}

\section{Introduction} 

\begin{amrnote}{}
	This intro was taken from COLT version -- I'm not sure the updating
	stuff is as relevant now.
	Rethink introduction in light of stronger results concerning 
	characterization and equivalence results.
	EXPLAIN ``Entropic Duality'' -- convex conjugation for functions
	restricted to a probability simplex -- see definition in \S\ref{sub:defn}.
\end{amrnote}

The combination or aggregation of predictions is central to machine
learning. 
Traditional Bayesian updating can be viewed as a particular way
of aggregating information that takes account of prior information.
Notions of ``mixability'' which play a key role in the setting of
prediction with expert advice offer a more general way to aggregate by
taking into account a loss function to evaluate predictions. 
As shown by Vovk~\cite{Vovk:2001}, his more general ``aggregating algorithm'' 
reduces to
Bayesian updating when log loss is used. 
However there is an implicit design variable in mixability that to date has
not been fully exploited.  
The aggregating algorithm makes use of a distance between the current
distribution and a prior which serves as a regularizer. 
In particular the aggregating algorithm uses the KL-divergence. 
We consider the general setting of an arbitrary loss and an arbitrary
regularizer (in the form of a Bregman divergence) and show that we recover
the core technical result of traditional mixability: if a loss is mixable in
our generalized sense then there is a generalized aggregating algorithm which
can be guaranteed to have constant regret. The generalized aggregating
algorithm is developed by optimizing the bound that defines our new notion
of mixability.
Our approach relies heavily on dual representations of entropy functions
defined on the probability simplex (hence the title). By doing so we
gain new insight into why the original mixability argument works and 
a broader understanding of when constant regret guarantees are possible.

\subsection{Mixability in Prediction With Expert Advice Games} 

A prediction with expert advice game is defined by its loss, a collection 
of experts that the player must compete against, and a fixed number of rounds.
Each round the experts reveal their predictions to the player and then
the player makes a prediction.
An observation is then revealed to the experts and the player and all
receive a penalty determined by the loss.
The aim of the player is to keep its total loss close to that of the best
expert once all the rounds have completed.
The difference between the total loss of the player and the total loss of
the best expert is called the regret and is the typically the focus of
the analysis of this style of game.
In particular, we are interested in when the regret is \emph{constant},
that is, independent of the number of rounds played.

More formally, let $X$ denote a set of possible \emph{observations}
and let $\acts$ denote a set of \emph{actions} or \emph{predictions}
the experts and player can perform.
A \emph{loss} $\lossx : \acts \to \R^X$ assigns the penalty 
$\ell_x(\act)$ to predicting $\act \in \acts$ when $x \in X$ is observed.
The set of experts is denoted $\Theta$ and the set of distributions over
$\Theta$ is denoted $\DT$.
In each round 
$t = 1, \ldots, T$, each expert $\theta\in\Theta$ makes a prediction
$\act^t_\theta \in \acts$.
These are revealed to the player who makes a prediction $\acth^t\in\acts$.
Once observation $x^t\in X$ is revealed the experts receive loss
$\lossx_{x^t}(\act^t_\theta)$ and the player receives loss 
$\lossx_{x^t}(\acth^t)$.
The aim of the player is to minimize its \emph{regret}
\bwnote{Made notation consistent here}
\(
	\regret(T) := \tlossx^T - \min_\theta \tlossx_\theta^T
\)
where $\tlossx^T := \sum_{t=1}^T \lossx_{x^t}(\acth^t)$ and 
$\tlossx_\theta^T = \sum_{t=1}^T \lossx_{x^t}(\act^t_\theta)$.
We will say the game has \emph{constant regret} if there exists a player
who can always make predictions that guarantee $\regret(T) \le R_{\ell,\Theta}$ 
for all $T$ and all expert predictions $\{\act^t_\theta\}_{t=1}^T$ where 
$R_{\ell,\Theta}$ is a constant that may depend on $\ell$ and $\Theta$.

In \cite{Vovk:1990, Vovk:1995}, Vovk showed that if the loss for a game
satisfies a condition called mixability then a player making 
predictions using the aggregating algorithm (AA) will achieve constant
regret.
\begin{definition}[Mixability and the Aggregating Algorithm]
	\label{def:vovk-mix}
	Given $\eta > 0$, a loss $\ell : \acts \to \R^X$ is \emph{$\eta$-mixable} 
	if, for all 
		expert predictions $\act_\theta \in \acts$, $\theta \in \Theta$ and all
		mixture distributions $\mu \in \DT$ over experts
	there exists 
		a prediction $\acth \in \acts$ such that 
	for all 
		outcomes $x \in X$
	we have
	\begin{equation}\label{eq:vovk-mix}
		\ell_x(\acth) 
		\le
		-\eta^{-1} \log \sum_{\theta \in \Theta} 
		\exp\left(
			-\eta \ell_x(\act_\theta)
		\right) \mu_\theta.
	\end{equation}
	The \emph{aggregating algorithm} starts with a mixture 
	$\mu^0 \in \DT$ over experts. In round $t$, experts predict
	$a^t_\theta$ and the player predicts 
	the $\acth^t \in \acts$ guaranteed by the $\eta$-mixability
	of $\ell$ so that \eqref{eq:vovk-mix} holds for $\mu = \mu^{t-1}$
	and $a_\theta = a^t_\theta$.
	Upon observing $x^t$, the mixture $\mu^t \in \DT$ is set so that
	$\mu_\theta^t 
	\propto \mu_\theta^{t-1} e^{-\eta \ell_{x^t}(a^t_\theta)}$.
\end{definition}

Mixability can be seen as a weakening of exp-concavity
(see \cite[\S3.3]{Cesa-Bianchi:2006}) that requires just enough of the loss
to ensure constant regret.

\begin{theorem}[Mixability implies constant regret \cite{Vovk:1995}]
	\label{thm:vovk-mix}
	If a loss $\ell$ is $\eta$-mixable then the aggregating algorithm
	will achieve $\regret(T) = \eta^{-1} \log |\Theta|$.
\end{theorem}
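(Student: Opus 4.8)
The plan is to run the classical partition-function argument, tracking the unnormalized weights that the aggregating algorithm implicitly maintains and then exploiting the telescoping structure of the multiplicative update. First I would fix the initial mixture to be uniform, $\mu^0_\theta = 1/|\Theta|$, and introduce unnormalized weights by $w^0_\theta := \mu^0_\theta$ and $w^t_\theta := w^{t-1}_\theta\, e^{-\eta \lossx_{x^t}(\act^t_\theta)}$. Unrolling this recursion gives $w^t_\theta = \mu^0_\theta \exp(-\eta\, \tlossx^t_\theta)$, so that the normalizing update of the algorithm is exactly $\mu^t_\theta = w^t_\theta / W^t$, where $W^t := \sum_{\theta\in\Theta} w^t_\theta$ and, by the uniform choice, $W^0 = 1$.

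The key step is to bound the player's per-round loss by the log-ratio of successive partition functions. Applying the $\eta$-mixability inequality \eqref{eq:vovk-mix} at the \emph{observed} outcome $x = x^t$, with $\mu = \mu^{t-1}$ and $\act_\theta = \act^t_\theta$, the guaranteed prediction $\acth^t$ satisfies $\lossx_{x^t}(\acth^t) \le -\eta^{-1}\log\sum_\theta \mu^{t-1}_\theta e^{-\eta \lossx_{x^t}(\act^t_\theta)}$. I would then observe the identity $\sum_\theta \mu^{t-1}_\theta e^{-\eta \lossx_{x^t}(\act^t_\theta)} = \sum_\theta (w^{t-1}_\theta/W^{t-1})\, e^{-\eta \lossx_{x^t}(\act^t_\theta)} = W^t/W^{t-1}$, which converts the mixability bound into the clean per-round inequality $\lossx_{x^t}(\acth^t) \le \eta^{-1}(\log W^{t-1} - \log W^t)$.

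Summing over $t = 1,\dots,T$ telescopes the right-hand side, and since $W^0 = 1$ this yields $\tlossx^T = \sum_t \lossx_{x^t}(\acth^t) \le -\eta^{-1}\log W^T$. To finish I would lower-bound $W^T$ by the single term corresponding to the best expert $\theta^\star \in \argmin_\theta \tlossx^T_\theta$, namely $W^T \ge \mu^0_{\theta^\star} e^{-\eta \tlossx^T_{\theta^\star}} = |\Theta|^{-1} \exp(-\eta\min_\theta \tlossx^T_\theta)$. Taking $-\eta^{-1}\log$ of this gives $\tlossx^T \le \min_\theta \tlossx^T_\theta + \eta^{-1}\log|\Theta|$, i.e. $\regret(T) \le \eta^{-1}\log|\Theta|$.

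There is no real obstacle here: mixability is precisely the hypothesis that makes the single-round step valid, so the only content is recognizing the telescoping partition-function identity. The two points that warrant care are that one must take $\mu^0$ uniform (a general prior would leave an extra additive $-\eta^{-1}\log\mu^0_{\theta^\star}$ term, hence the dependence of the constant on $\Theta$), and that the equality asserted in the statement is really the worst-case bound ``$\le$''.
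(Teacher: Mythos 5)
Your proof is correct: it is the classical partition-function argument for Vovk's theorem, and each step is sound, including your closing remark that the stated equality should be read as the worst-case bound ``$\le$''. The paper itself never runs this argument directly; it obtains Theorem~\ref{thm:vovk-mix} as a corollary of the general Theorem~\ref{thm:main} with $\Phi = \eta^{-1}H$ and $\mu = |\Theta|^{-1}\ones$. That proof (Appendix~\ref{app:proof-of-main}) telescopes the dual-space quantities $\Phi^*(\D\Phi(\mu^{t-1})) - \Phi^*(\D\Phi(\mu^{t-1}) - \lossx_{x^t}(A^t))$ using Lemma~\ref{lem:alt-mix}, Lemma~\ref{lem:updates} and the translation invariance of the entropic dual, and then bounds the resulting infimum by substituting $\mu' = \delta_\theta$, yielding the regret constant $D_\Phi(\delta_\theta,\mu^0)$. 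Under the Shannon specialization your argument and the paper's coincide exactly: with $\Phi = \eta^{-1}H$ one computes $\Phi^*(\D\Phi(\mu^t)) = \eta^{-1}(1 + \log W^t)$, so your per-round bound $\lossx_{x^t}(\acth^t) \le \eta^{-1}(\log W^{t-1} - \log W^t)$ is the paper's per-round inequality, your telescoping is the paper's telescoping, and your lower bound $W^T \ge \mu^0_{\theta^\star} e^{-\eta \tlossx^T_{\theta^\star}}$ is precisely the substitution $\mu' = \delta_{\theta^\star}$, since $D_H(\delta_\theta,\mu^0) = -\log\mu^0_\theta = \log|\Theta|$ for the uniform prior. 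What your route buys is a self-contained, elementary proof needing no convex-analysis machinery; what the paper's route buys is that the identical telescoping works for an arbitrary entropy $\Phi$, exposing the fact that the $\exp$/$\log$ structure you rely on is nothing more than translation invariance of $H^*$ --- which is the paper's central point. Your observation about non-uniform priors also matches the paper's general bound, where the constant $D_\Phi(\delta_\theta,\mu^0)$ reduces in the Shannon case to exactly your extra $-\eta^{-1}\log\mu^0_{\theta^\star}$ term.
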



\subsection{Contributions} 

The key contributions of this paper are as follows. We provide a new
general definition (Definition~\ref{def:mix}) of mixability and an induced 
generalized aggregating algorithm (Definition~\ref{def:updates}) and show
(Theorem~\ref{thm:main}) that  prediction with expert advice using a
$\Phi$-mixable loss and the associated generalized aggregating algorithm is
guaranteed to have constant regret.   The proof illustrates that the log
and exp functions that arise in the classical aggregating algorithm are
themselves not special, but rather it is a translation invariant property
of the convex conjugate of and entropy $\Phi$ defined on a probability
simplex that is the crucial property that leads to constant regret.

We characterize
(Theorem~\ref{thm:legendre}) for which entropies $\Phi$ there exists
$\Phi$-mixable losses via the Legendre property.  
\bwnote{tweak if necessary} 
%
We show that
$\Phi$-mixability of a loss can be expressed directly in terms of the Bayes
risk associated with the loss (Definition~\ref{def:mix-F} and
Theorem~\ref{thm:mix-proper}), reflecting the situation that holds for
classical mixability~\cite{Erven:2012}.  As part of this analysis we show
that proper losses are quasi-convex (Lemma~\ref{lem:proper-qc}) which, to
the best of our knowledge appears to be a new result.


\subsection{Related Work} 

The starting point for mixability and the aggregating algorithm is the work
of \cite{Vovk:1995,Vovk:1990}.
The general setting of prediction with expert advice is summarized in
\cite[Chapters 2 and 3]{Cesa-Bianchi:2006}. There one can find a range of
 results that study different aggregation schemes and different
 assumptions on the losses (exp-concave, mixable).
 Variants of the aggregating algorithm have been studied for classically
 mixable losses, with a trade-off between tightness of the bound (in a
 constant factor) and the computational complexity \cite{Kivinen1999}.
Weakly mixable losses are a generalization of mixable losses. They have
been studied in \cite{Kalnishkan2008} where it is  shown there exists a
variant of the aggregating algorithm that achieves regret $C\sqrt{T}$ for
some constant $C$.
Vovk~\cite[in \S2.2]{Vovk:2001} makes the observation that his Aggregating
Algorithm 
reduces to Bayesian mixtures in the case of the log loss game. See also the
discussion in \cite[page 330]{Cesa-Bianchi:2006} relating certain
aggregation schemes to Bayesian updating.

The general form of updating we propose is similar to that considered by
Kivinen and Warmuth~\cite{Kivinen:1997} 
who consider finding a vector $w$ minimizing
\(
	d(w,s) + \eta L(y_t, w\cdot x_t)
\)
where $s$ is some starting vector, $(x_t, y_t)$ is the instance/label
observation at round $t$ and $L$ is a loss.  The key difference between
their formulation and ours is that our loss term is (in their notation)
$w\cdot L(y_t, x_t)$ -- \ie, the linear combination of the losses of the
$x_t$ at $y_t$ and not the loss of their inner product.
Online methods of density estimation for exponential families are discussed in
\cite[\S3]{Azoury:2001} where the authors compare the online and offline updates of
the same sequence and make heavy use of the relationship between the KL
divergence between members of an exponential family and an associated Bregman 
divergence between the parameters of those members.
The analysis of mirror descent \cite{Beck:2003} shows that it achieves 
constant regret when the entropic regularizer is used. 
However, there is no consideration regarding whether similar results 
extend to other entropies defined on the simplex.

We stress that the idea of the more general regularization and updates is
hardly new. See for example the discussion of potential based methods in
\cite{Cesa-Bianchi:2006} and other references later in the paper. The key
novelty is the generalized notion of mixability, the name of which is
justified by the key new technical result --- a constant regret bound
assuming the general mixability condition achieved via a generalized
algorithm which can be seen as intimately related to mirror descent.
Crucially, our result depends on some properties of the conjugates of potentials defined over probabilities that do not hold for potential functions defined over more general spaces.


\section{Generalized Mixability and Aggregation via Convex Duality} 

In this section we introduce our generalizations of mixability and the 
aggregating algorithm. 
One feature of our approach is the way the generalized aggregating algorithm
falls out of the definition of generalized mixability as the minimizer of 
the mixability bound.
Our approach relies on concepts and results from convex analysis.
Terms not defined below can be found in a reference such as 
\cite{Hiriart-Urruty:2001}.

\subsection{Definitions and Notation}\label{sub:defn} 

A convex function $\Phi : \DT \to \R$ is called an \emph{entropy} (on $\DT$)
if it is proper (\ie, $-\infty < \Phi \ne +\infty$), convex\footnote{
	While the information theoretic notion of Shannon entropy as a measure of
	uncertainty is concave, it is 
	convenient for us to work with convex functions on the simplex which
	can be thought of as certainty measures.
}, and lower semi-continuous.
In the following example and elsewhere we use $\ones$ to denote 
the vector $\ones_\theta = 1$ for all $\theta\in\Theta$ so that 
$|\Theta|^{-1} \ones \in \Delta_\Theta$ is the uniform distribution over 
$\Theta$. 
\begin{example}[Entropies]\label{ex:entropies}
	The \emph{(negative) Shannon entropy} 
	$H(\mu) := \sum_\theta \mu_\theta \log \mu_\theta$;
	the \emph{quadratic entropy}
	$Q(\mu) := \sum_\theta (\mu - |\Theta|^{-1}\ones)^2$;
	the \emph{Tsallis entropies} 
	$\tsallis_\alpha(\mu) := \alpha^{-1} \left(
		\sum_\theta \mu_\theta^{\alpha+1} - 1
	\right)$ for $\alpha \in (-1,0) \cup (0, \infty)$;
	and the \emph{R\'enyi entropies}
	$\renyi_\alpha(\mu) = \alpha^{-1} \left( 
		\log \sum_\theta \mu_\theta^{\alpha + 1}
	\right)$, for $\alpha \in (-1, 0)$. 
	We note that both Tsallis and R\'enyi entropies limit to Shannon 
	entropy $\alpha \to 0$ (cf. \cite{Maszczyk:2008,Van-Erven:2012}).
\end{example}
Let $\inner{\mu, v}$ denote the inner product between 
$\mu \in \DT$ and $v\in\DT^*$, the dual space of $\DT$.
The \emph{Bregman divergence} associated with a suitably differentiable 
entropy $\Phi$ on $\DT$ is given by
\begin{equation}
	\label{eq:bregman-def}
	D_\Phi(\mu, \mu')
	= 
	\Phi(\mu) - \Phi(\mu') - \inner{\mu - \mu', \D\Phi(\mu')}
\end{equation}
for all $\mu \in \DT$ and $\mu' \in \ri(\DT)$, the relative interior of
$\DT$.
Given an entropy $\Phi : \DT \to \R$, we define its \emph{entropic dual} 
to be $\Phi^*(v) := \sup_{\mu\in\DT} \inner{\mu,v} - \Phi(\mu)$ where 
$v \in \DT^*$, \ie, the dual space to $\DT$.
Note that one could also write the supremum over 
$\R^{\Theta}$ by setting $\Phi(\mu) = +\infty$ for $\mu \notin\DT$ so
that $\Phi^*$ is just the usual convex dual (cf. \cite{Hiriart-Urruty:2001}).
Thus, all of the standard results about convex duality also 
hold for entropic duals provided some care is taken with the domain of 
definition.
We note that although the regular convex dual of $H$ defined over all of $\R^\Theta$ is $v \mapsto \sum_\theta \exp(v_\theta-1)$ its entropic dual is 
$H^*(v) = \log\sum_\theta \exp(v_\theta)$.

For differentiable $\Phi$, it is known \cite{Hiriart-Urruty:2001} that the 
supremum defining $\Phi^*$ is attained at $\mu = \D\Phi^*(v)$.
That is,
\begin{equation}\label{eq:conjugate-sup}
	\Phi^*(v) = \inner{\D\Phi^*(v),v} - \Phi(\D\Phi^*(v)).
\end{equation}
A similar result holds for $\Phi$ by applying this result to $\Phi^*$ and
using $\Phi = (\Phi^*)^*$.
We will make repeated use of two easy established properties of entropic duals 
(see Appendix~\ref{app:lemmas} for proof).
\begin{lemma}\label{lem:prob-duals}
	If $\Phi$ is an entropy over $\DT$ and $\Phi_\eta := \eta^{-1}\Phi$ denotes
	a scaled version of $\Phi$ then 1) for all $\eta > 0$ we have
	$\Phi_\eta^*(v) = \eta^{-1}\Phi^*(\eta v)$; and 2) the entropic dual 
	$\Phi^*$ is \emph{translation invariant} -- \ie, for all
	$v \in \DT^*$ and $\alpha \in \R$ we have 
	$\Phi^*(v + \alpha\ones) = \Phi^*(v) + \alpha$ and hence for differentiable
	$\Phi^*$ we have
	$\D\Phi^*(v + \alpha\ones) = \D\Phi^*(v)$.
\end{lemma}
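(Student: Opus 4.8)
The plan is to unwind the definition $\Phi^*(v) = \sup_{\mu\in\DT}\inner{\mu,v} - \Phi(\mu)$ in each case; both claims reduce to elementary manipulations of this supremum, the single load-bearing observation being that the feasible set is the simplex $\DT$ rather than all of $\R^\Theta$.

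For claim 1), I would substitute $\Phi_\eta = \eta^{-1}\Phi$ directly into the definition and factor the positive scalar out of the objective, writing $\inner{\mu,v} - \eta^{-1}\Phi(\mu) = \eta^{-1}\left(\inner{\mu,\eta v} - \Phi(\mu)\right)$. Since $\eta^{-1} > 0$ it passes through the supremum, so $\Phi_\eta^*(v) = \eta^{-1}\sup_{\mu\in\DT}\left(\inner{\mu,\eta v} - \Phi(\mu)\right) = \eta^{-1}\Phi^*(\eta v)$.

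For claim 2), I would expand $\inner{\mu, v + \alpha\ones} = \inner{\mu,v} + \alpha\inner{\mu,\ones}$. The crucial step is that every $\mu \in \DT$ is a probability vector, so $\inner{\mu,\ones} = \sum_\theta \mu_\theta = 1$; hence the $\alpha$ term is a constant independent of $\mu$ and can be pulled out of the supremum, yielding $\Phi^*(v+\alpha\ones) = \alpha + \sup_{\mu\in\DT}\inner{\mu,v} - \Phi(\mu) = \Phi^*(v) + \alpha$. The gradient identity then follows by differentiating this equality in $v$ with $\alpha$ held fixed: the left side gives $\D\Phi^*(v+\alpha\ones)$ by the chain rule (the map $v\mapsto v + \alpha\ones$ has identity derivative) while the right side gives $\D\Phi^*(v)$, since the constant $\alpha$ contributes nothing.

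I do not expect a genuine obstacle here; the only point requiring care, and the reason the result is special to \emph{entropic} duals, is the normalization $\inner{\mu,\ones}=1$ forced by the simplex constraint. This is exactly what fails for the ordinary convex conjugate taken over all of $\R^\Theta$, where the $\alpha\inner{\mu,\ones}$ term becomes unbounded in $\mu$ and translation invariance is lost. This is consistent with the earlier remark that the unrestricted dual of $H$ is $v\mapsto\sum_\theta\exp(v_\theta-1)$ whereas its entropic dual $\log\sum_\theta\exp(v_\theta)$ does satisfy $\Phi^*(v+\alpha\ones)=\Phi^*(v)+\alpha$.
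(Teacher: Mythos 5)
Your proof is correct and follows essentially the same route as the paper's: both claims are handled by direct manipulation of the supremum defining $\Phi^*$, with the simplex normalization $\inner{\mu,\ones}=1$ giving translation invariance and differentiation of that identity giving the gradient statement. Your closing remark about why this fails for the unrestricted conjugate is a nice addition but not part of the paper's argument.
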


The translation invariance if $\Phi^*$ is central to our analysis. 
It is what ensures our $\Phi$-mixability inequality \eqref{eq:mixability} 
``telescopes'' when it is summed.
The proof of the original mixability result (Theorem~\ref{thm:vovk-mix}) uses 
a similar telescoping argument that works due to the interaction of $\log$
and $\exp$ terms in Definition~\ref{def:vovk-mix}.
Our results show that this telescoping property is not due to any special
properties of $\log$ and $\exp$, but rather because of the translation
invariance of the entropic dual of Shannon entropy, $H$.
The following analysis generalizes that of the original work on mixability 
precisely because this property holds for the dual of any entropy.


\subsection{$\Phi$-Mixability and the Generalized Aggregating Algorithm} 

For convenience, we will use $A \in \acts^{\Theta}$ to denote a collection of 
expert predictions and $A_\theta \in \acts$ to denote the prediction of 
expert $\theta$.
Abusing notation slightly, we will write $\ell(A) \in \R^{X\times\Theta}$ for 
the matrix of loss values $[ \ell_x(A_\theta) ]_{x,\theta}$, 
and $\ell_x(A) = [\ell_x(A_\theta)]_\theta \in \R^\Theta$ for the vector of 
losses for each expert $\theta$ on outcome $x$.

\begin{definition}[$\Phi$-mixability]\label{def:mix}
	Let $\Phi$ be an entropy on $\DT$. A loss $\ell : \acts \to \R^X$ is
	$\Phi$-mixable if for all $A \in \acts^{\Theta}$, all $\mu\in\DT$,
	there exists an $\acth \in \acts$ such that for all $x \in X$
	\begin{equation}\label{eq:mixability}
		\ell_x(\acth) 
		\le 
		\Mix_{\ell,x}^\Phi(A,\mu) 
		:= \inf_{\mu'\in\DT} \inner{\mu', \ell_x(A)} + D_\Phi(\mu', \mu).
	\end{equation}
\end{definition}

The term on the right-hand side of \eqref{eq:mixability} has some intuitive
appeal.
Since $\inner{\mu',A} = \E{\theta \sim \mu'}{\ell_x(A_\theta)}$ 
(\ie, the expected loss 
of an expert drawn at random according to $\mu'$) we can view the 
optimization as a trade off between finding a mixture $\mu'$ that tracks the 
expert with the smallest loss upon observing outcome $x$ and keeping $\mu'$ 
close to $\mu$, as measured by $D_\Phi$.
In the special case when $\Phi$ is Shannon entropy, $\ell$ is log loss, and 
expert predictions $A_\theta \in \DX$ are distributions over $X$ such an
optimization is equivalent to Bayesian updating \cite{Williams:1980}.

To see that $\Phi$-mixability is indeed a generalization of 
Definition~\ref{def:vovk-mix}, we
make use of an alternative form for the right-hand side of
the bound in the $\Phi$-mixability definition that ``hides'' the infimum 
inside $\Phi^*$.
As shown in Appendix~\ref{app:lemmas} this is a straight-forward consequence of
\eqref{eq:conjugate-sup}.

\begin{lemma}\label{lem:alt-mix}
	The mixability bound 
	\begin{equation}\label{eq:alt-mix}
		\Mix_{\ell,x}^\Phi(A,\mu) 
		= 
		\Phi^*(\D\Phi(\mu)) - \Phi^*(\D\Phi(\mu) - \ell_x(A)).
	\end{equation}
	Hence, for $\Phi = \eta^{-1}H$ we have
	$\Mix_{\ell,x}^\Phi(A,\mu) 
	= -\eta^{-1}\log\sum_\theta\exp(-\eta \ell_x(A_\theta))\mu_\theta$
	which is the bound in Definition~\ref{def:vovk-mix}.
\end{lemma}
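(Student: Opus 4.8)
The plan is to prove \eqref{eq:alt-mix} by a direct calculation: unfold the Bregman divergence inside the infimum defining $\Mix_{\ell,x}^\Phi$, separate the terms that do not depend on the optimization variable, and recognize the remaining infimum as (minus) an entropic dual; the special case then drops out by substituting $\Phi=\eta^{-1}H$. To lighten notation I would write $v:=\D\Phi(\mu)$ and $L:=\ell_x(A)$. Expanding definition \eqref{eq:bregman-def} inside \eqref{eq:mixability} gives
\[
\inner{\mu', L} + D_\Phi(\mu',\mu) = \inner{\mu', L - v} + \Phi(\mu') + \big(\inner{\mu, v} - \Phi(\mu)\big),
\]
where I have collected $-\Phi(\mu)+\inner{\mu,v}$, which is constant in $\mu'$, into the final bracket.

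Taking $\inf_{\mu'\in\DT}$ and using $\inf_{\mu'}\big(\inner{\mu', L-v}+\Phi(\mu')\big)=-\sup_{\mu'}\big(\inner{\mu', v-L}-\Phi(\mu')\big)=-\Phi^*(v-L)$ by the definition of the entropic dual, I obtain
\[
\Mix_{\ell,x}^\Phi(A,\mu) = -\Phi^*(v-L) + \inner{\mu, v} - \Phi(\mu).
\]
The last step is to identify $\inner{\mu,v}-\Phi(\mu)$ with $\Phi^*(v)$. Here I would invoke the Fenchel--Young equality: since $v=\D\Phi(\mu)$, the supremum defining $\Phi^*(v)$ is attained at $\mu'=\mu$, which is exactly \eqref{eq:conjugate-sup} read for $\Phi$, so that $\Phi^*(\D\Phi(\mu))=\inner{\mu,\D\Phi(\mu)}-\Phi(\mu)$. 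Substituting yields $\Mix_{\ell,x}^\Phi(A,\mu)=\Phi^*(v)-\Phi^*(v-L)=\Phi^*(\D\Phi(\mu))-\Phi^*(\D\Phi(\mu)-\ell_x(A))$, which is \eqref{eq:alt-mix}.

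For the special case $\Phi=\eta^{-1}H$, I would compute $\D\Phi(\mu)=\eta^{-1}(\log\mu+\ones)$ componentwise, and, using part~1 of Lemma~\ref{lem:prob-duals} together with the noted form $H^*(w)=\log\sum_\theta\exp(w_\theta)$, get $\Phi^*(w)=\eta^{-1}\log\sum_\theta\exp(\eta w_\theta)$. Translation invariance (part~2 of Lemma~\ref{lem:prob-duals}) lets me peel off the $\eta^{-1}\ones$ shift, giving $\Phi^*(\D\Phi(\mu))=\eta^{-1}$ and $\Phi^*(\D\Phi(\mu)-\ell_x(A))=\eta^{-1}+\eta^{-1}\log\sum_\theta\mu_\theta\exp(-\eta\ell_x(A_\theta))$. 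Their difference is $-\eta^{-1}\log\sum_\theta\exp(-\eta\ell_x(A_\theta))\mu_\theta$, matching Definition~\ref{def:vovk-mix}.

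I expect the algebra to be routine; the one genuine subtlety, and the step I would be most careful about, is the domain bookkeeping for the entropic dual. Concretely, I must justify that the supremum is taken over $\DT$ rather than all of $\R^\Theta$ and that it is attained at $\mu'=\mu$ precisely when $v=\D\Phi(\mu)$. This relies on restricting to $\mu\in\ri(\DT)$, as already required for $D_\Phi$ and $\D\Phi$ to be well defined, so that the Fenchel--Young equality holds in the entropic sense rather than only for the ordinary conjugate over $\R^\Theta$.
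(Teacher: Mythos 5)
Your proposal is correct and is essentially the paper's own proof read in the opposite direction: the paper starts from $\Phi^*(\D\Phi(\mu)) - \Phi^*(\D\Phi(\mu)-\ell_x(A))$, expands the dual and applies \eqref{eq:conjugate-sup} to recover the Bregman form, whereas you expand $D_\Phi$ inside the infimum and collapse to the duals, invoking the same Fenchel--Young identity $\Phi^*(\D\Phi(\mu))=\inner{\mu,\D\Phi(\mu)}-\Phi(\mu)$. Your handling of the special case (computing directly with $\Phi=\eta^{-1}H$ via part~1 of Lemma~\ref{lem:prob-duals}, rather than first doing $\Phi=H$ and then scaling as the paper does) is an equally valid, purely cosmetic reorganization.
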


We now define a generalization of the Aggregating Algorithm  of
Definition~\ref{def:vovk-mix} that very naturally relates to our 
definition of $\Phi$-mixability:
starting with some initial distribution over experts, the algorithm
repeatedly incorporates the information about the experts' performances
by finding the minimizer $\mu'$ in \eqref{eq:mixability}.

\begin{definition}[Generalized Aggregating Algorithm]\label{def:updates}
	The algorithm begins with a mixture distribution $\mu^0 \in \DT$ over
	experts.
	On round $t$, after receiving expert predictions $A^t \in \acts^\Theta$,
	the \emph{generalized aggregating algorithm} (GAA) predicts 
	any $\acth \in \acts$ 
	such that $\ell_x(\acth) \le \Mix_{\ell,x}^\Phi(A^t,\mu^{t-1})$ 
	for all $x$ which is 
	guaranteed to exist by the $\Phi$-mixability of $\ell$.
	After observing $x^t \in X$, the GAA updates the mixture 
	$\mu^{t-1} \in \DT$ by setting
	\begin{equation}
		\label{eq:gen-aa-update}
		\mu^t 
		:= 
		\argmin_{\mu' \in \DT}
			\inner{\mu',\ell_{x^t}(A^t)} + D_\Phi(\mu', \mu^{t-1}).
	\end{equation}
\end{definition}

We now show that this updating process simply aggregates the 
per-expert losses $\ell_x(A)$ in the dual space 
$\DT^*$ with $\D\Phi(\mu^0)$ as the starting point.
The GAA is therefore closely related to mirror descent techniques
\cite{Beck:2003}.

\begin{lemma}\label{lem:updates}
	The GAA updates $\mu^t$ in \eqref{eq:gen-aa-update} satisfy
	\(
		\D\Phi(\mu^t) 
		=
		\D\Phi(\mu^{t-1}) - \ell_{x^t}(A^t)
	\)
	for all $t$ and so
	\begin{equation}\label{eq:updates}
		\D\Phi(\mu^T) = \D\Phi(\mu^0) - \sum_{t=1}^T \ell_{x^t}(A^t).
	\end{equation}
\end{lemma}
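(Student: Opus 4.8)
The plan is to identify the minimizer $\mu^t$ defining the update \eqref{eq:gen-aa-update} in closed form and then apply $\D\Phi$ to it, exploiting the fact that $\D\Phi$ inverts $\D\Phi^*$. The per-round identity then follows at once, and the closed form \eqref{eq:updates} drops out by telescoping.

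First I would reuse the computation behind Lemma~\ref{lem:alt-mix}. Expanding $D_\Phi(\mu',\mu^{t-1})$ in \eqref{eq:gen-aa-update} via \eqref{eq:bregman-def} and discarding the terms independent of $\mu'$, the objective becomes, up to an additive constant,
\[
	\Phi(\mu') - \inner{\mu', \D\Phi(\mu^{t-1}) - \ell_{x^t}(A^t)}.
\]
Minimizing this over $\mu'\in\DT$ is equivalent to maximizing $\inner{\mu', \D\Phi(\mu^{t-1}) - \ell_{x^t}(A^t)} - \Phi(\mu')$, whose optimal value is $\Phi^*(\D\Phi(\mu^{t-1}) - \ell_{x^t}(A^t))$ and whose maximizer, by \eqref{eq:conjugate-sup}, is attained at $\D\Phi^*$ of the same argument. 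Hence
\[
	\mu^t = \D\Phi^*\bigl(\D\Phi(\mu^{t-1}) - \ell_{x^t}(A^t)\bigr).
\]
Next I would apply $\D\Phi$ to both sides, using that $\D\Phi$ is a left inverse of $\D\Phi^*$ on the relevant domain (the ``similar result'' obtained by applying \eqref{eq:conjugate-sup} to $\Phi^*$ together with $\Phi = (\Phi^*)^*$). This yields the per-round equation $\D\Phi(\mu^t) = \D\Phi(\mu^{t-1}) - \ell_{x^t}(A^t)$, and a one-line induction on $t$ telescopes $\D\Phi(\mu^{t-1})$ against $\D\Phi(\mu^t)$ to leave $\D\Phi(\mu^T) = \D\Phi(\mu^0) - \sum_{t=1}^T \ell_{x^t}(A^t)$.

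The main obstacle I anticipate is that $\D\Phi$ is determined only up to an additive multiple of $\ones$, since $\Phi$ is defined on the lower-dimensional set $\DT$ and the normalization constraint $\inner{\mu',\ones} = 1$ contributes a Lagrange term to the stationarity condition; equivalently, by the translation invariance of $\Phi^*$ in Lemma~\ref{lem:prob-duals} we have $\D\Phi^*(v + \alpha\ones) = \D\Phi^*(v)$, so the composition $\D\Phi\circ\D\Phi^*$ is the identity only modulo $\ones$. I would dispatch this by reading the identity $\D\Phi(\mu^t) = \D\Phi(\mu^{t-1}) - \ell_{x^t}(A^t)$ as an equality in $\DT^*$ up to this $\ones$-ambiguity, and by observing that this is precisely the slack the downstream analysis tolerates: every use of \eqref{eq:updates} passes through $\Phi^*$, which by the same translation invariance is insensitive to adding multiples of $\ones$. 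A secondary point requiring care is that each $\mu^t$ must lie in $\ri(\DT)$ for $\D\Phi(\mu^t)$ to be defined; this holds whenever $\Phi$ is steep enough that the conjugate's maximizer stays in the relative interior, the regularity later formalized via the Legendre property in Theorem~\ref{thm:legendre}.
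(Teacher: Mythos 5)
Your proof is correct and takes essentially the same route as the paper's: the paper forms the Lagrangian for \eqref{eq:gen-aa-update}, obtains the stationarity condition $\D\Phi(\mu^t) = \D\Phi(\mu^{t-1}) - \ell_{x^t}(A^t) - \alpha^t\ones$, and then discards the multiplier $\alpha^t\ones$ via the translation invariance of $\Phi^*$ (Lemma~\ref{lem:prob-duals}) --- precisely the $\ones$-ambiguity you identify and resolve the same way, with your conjugate-sup identification of the minimizer replacing the explicit Lagrangian. If anything, your handling of the modulo-$\ones$ indeterminacy and the $\ri(\DT)$ regularity is more explicit than the paper's brief remark that the constants $\alpha^t$ ``are arbitrary and can be ignored.''
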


The proof is given in Appendix~\ref{app:lemmas}.
Finally, to see that the above is indeed a generalization of the Aggregating 
Algorithm from Definition~\ref{def:vovk-mix} we need only apply 
Lemma~\ref{lem:updates} and observe that for 
$\Phi = \eta^{-1} H$ we have 
$\D\Phi(\mu) = \eta^{-1}(\log(\mu) + \ones)$ and so 
$\log \mu^t = \log \mu^{t-1} - \eta \ell_{x^t}(A^t)$.
Exponentiating this vector equality element-wise gives 
$\mu_\theta^t \propto \mu^{t-1}_\theta \exp(-\eta \ell_{x^t}(A_\theta^t))$.



\section{Properties of $\Phi$-mixability} 

In this section we establish a number of key properties for $\Phi$-mixability,
the most important of these being that $\Phi$-mixability implies constant
regret.
We also show that $\Phi$-mixability is not a vacuous concept for $\Phi$ other
than Shannon entropy by showing that any Legendre $\Phi$ has $\Phi$-mixable 
losses and that this is a necessary condition for such losses to exist.

\subsection{$\Phi$-mixability Implies Constant Regret} 

\begin{theorem}\label{thm:main}
	If $\lossx : \acts \to \R^X$ is $\Phi$-mixable then there is
	a family of strategies parameterized by $\mu\in\DT$ which,
	for any sequence of observations 
	$x^1, \ldots, x^T \in X$ 
	and sequence of expert predictions $A^1, \ldots , A^T \in \acts^\Theta$, 
	plays a sequence $\acth^1, \ldots, \acth^T \in \acts$ such that for all
	$\theta \in \Theta$
	\begin{equation}\label{eq:main}
		\sum_{t=1}^T \lossx_{x^t}(\acth^t)
		\le
		\sum_{t=1}^T \lossx_{x^t}(A^t_\theta) +
			D_\Phi(\delta_\theta, \mu) .
	\end{equation}
\end{theorem}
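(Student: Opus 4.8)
The plan is to run the generalized aggregating algorithm of Definition~\ref{def:updates} initialized at $\mu^0 = \mu$; this is the strategy indexed by $\mu \in \DT$. Writing $v^t := \D\Phi(\mu^t)$ for the dual iterates, the argument has three movements: a per-round bound coming from $\Phi$-mixability, a telescoping sum carried out in the dual space, and a single application of the conjugate (Fenchel--Young) inequality to extract the comparator expert $\theta$.

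First, since $\lossx$ is $\Phi$-mixable, on round $t$ the GAA is guaranteed a prediction $\acth^t$ with $\lossx_{x^t}(\acth^t) \le \Mix^\Phi_{\lossx,x^t}(A^t, \mu^{t-1})$. Rewriting the right-hand side via the dual form of Lemma~\ref{lem:alt-mix} gives
\[
  \lossx_{x^t}(\acth^t) \le \Phi^*(\D\Phi(\mu^{t-1})) - \Phi^*\!\big(\D\Phi(\mu^{t-1}) - \lossx_{x^t}(A^t)\big).
\]
By Lemma~\ref{lem:updates} the GAA update is exactly $\D\Phi(\mu^t) = \D\Phi(\mu^{t-1}) - \lossx_{x^t}(A^t)$, so the subtracted term is precisely $\Phi^*(v^t)$ and the bound becomes $\lossx_{x^t}(\acth^t) \le \Phi^*(v^{t-1}) - \Phi^*(v^t)$. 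Summing over $t = 1, \dots, T$ telescopes to
\[
  \sum_{t=1}^T \lossx_{x^t}(\acth^t) \le \Phi^*(v^0) - \Phi^*(v^T) = \Phi^*(\D\Phi(\mu^0)) - \Phi^*\!\Big(\D\Phi(\mu^0) - \textstyle\sum_{t=1}^T \lossx_{x^t}(A^t)\Big),
\]
where the final equality is the aggregated form \eqref{eq:updates} of Lemma~\ref{lem:updates}.

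It remains to show this upper bound is at most $\sum_t \lossx_{x^t}(A^t_\theta) + D_\Phi(\delta_\theta, \mu^0)$. Writing $v := \D\Phi(\mu^0)$ and $L := \sum_{t=1}^T \lossx_{x^t}(A^t) \in \R^\Theta$, the comparator's cumulative loss is $\sum_t \lossx_{x^t}(A^t_\theta) = \inner{\delta_\theta, L}$. Using that the conjugate supremum is attained at $\mu^0$ (equation \eqref{eq:conjugate-sup}), namely $\Phi^*(v) = \inner{\mu^0, v} - \Phi(\mu^0)$, together with the definition \eqref{eq:bregman-def} of $D_\Phi(\delta_\theta, \mu^0)$, a short computation cancels the $\Phi(\mu^0)$ and $\inner{\mu^0, v}$ terms and yields $\Phi^*(v) - D_\Phi(\delta_\theta, \mu^0) = \inner{\delta_\theta, v} - \Phi(\delta_\theta)$. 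Hence the target inequality reduces to
\[
  \inner{\delta_\theta, v - L} - \Phi(\delta_\theta) \le \Phi^*(v - L),
\]
which is immediate from the definition $\Phi^*(v-L) = \sup_{\mu' \in \DT} \inner{\mu', v - L} - \Phi(\mu')$ evaluated at the feasible point $\mu' = \delta_\theta \in \DT$.

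I expect the only real subtlety to be the clean telescoping step, which hinges on Lemma~\ref{lem:updates} holding as an \emph{exact} identity in the dual space. The constrained minimization \eqref{eq:gen-aa-update} over the simplex produces a Lagrange-multiplier term of the form $\lambda_t \ones$ in the stationarity condition, and it is precisely the translation invariance of $\Phi^*$ from Lemma~\ref{lem:prob-duals} that lets this term be discarded, so that $\D\Phi(\mu^t)$ and $\D\Phi(\mu^{t-1}) - \lossx_{x^t}(A^t)$ induce the same value of $\Phi^*$; I would invoke this explicitly rather than assume the raw first-order condition already telescopes. A secondary point requiring care is that $\delta_\theta$ lies on the boundary of $\DT$, so one should confirm $\Phi(\delta_\theta)$ (equivalently $D_\Phi(\delta_\theta, \mu^0)$) is finite for the bound to be non-vacuous; this is a property of the chosen entropy $\Phi$ rather than of the argument itself.
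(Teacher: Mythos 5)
Your proposal is correct and follows essentially the same route as the paper: run the GAA from $\mu^0=\mu$, apply Lemma~\ref{lem:alt-mix} per round, telescope in the dual space via Lemma~\ref{lem:updates}, and compare against $\mu'=\delta_\theta$. The only cosmetic difference is that where the paper reuses Lemma~\ref{lem:alt-mix} to rewrite the telescoped bound as $\inf_{\mu'\in\DT}\inner{\mu',\sum_t \lossx_{x^t}(A^t)}+D_\Phi(\mu',\mu^0)$ before specializing to $\delta_\theta$, you inline the same duality computation (the attainment identity \eqref{eq:conjugate-sup} plus the Fenchel--Young inequality at $\delta_\theta$), which is precisely the content of that lemma's proof.
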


The proof is in Appendix~\ref{app:proof-of-main} and is a straight-forward
consequence of Lemma~\ref{lem:alt-mix} and the translation invariance of
$\Phi^*$.
The standard notion of mixability is recovered when $\Phi = \frac{1}{\eta}H$ 
for $\eta > 0$ and $H$ the Shannon entropy on $\DT$.
In this case, Theorem~\ref{thm:vovk-mix} is obtained as a corollary
for $\mu = |\Theta|^{-1}\ones$, the uniform distribution
over $\Theta$.
A compelling feature of our result is that it gives a natural interpretation
of the constant $D_\Phi(\delta_\theta, \pi)$ in the regret bound: if
$\pi$ is the initial guess as to which expert is best before the game starts,
the ``price'' that is paid by the player is exactly how far (as measured by
$D_\Phi$) the initial guess was from the distribution that places all its
mass on the best expert.

The following example computes mixability bounds for the alternative
entropies introduced in \S\ref{sub:defn}.
They will be discussed again in \S\ref{sub:asymptotics} below.

\begin{example}\label{ex:regrets}
	Consider games with $K = |\Theta|$ experts and $\mu = K^{-1}\ones$.
	For the (negative) Shannon entropy, 
	the regret bound from Theorem~\ref{thm:main} is 
	$D_H(\delta_\theta, \mu) = \log K$.
	For quadratic entropy the regret bound is 
	$D_Q(\delta_\theta, \mu) = 1 - \frac{2(K-1)}{K^2}$.
	For the family of Tsallis entropies 
	the regret bound 
	given by 
	$D_{\tsallis_\alpha}(\delta_\theta, K^{-1}\ones) = \alpha^{-1}(1-K^{-\alpha})$.
	For the family of R\'enyi entropies 
	 the regret bound becomes
	$D_{\renyi_\alpha}(\delta_\theta, K^{-1}\ones) = \log K$.
\end{example}

A second, easily established result concerns the mixability of scaled entropies.
The proof is by observing that in \eqref{eq:mixability} the only term in the 
definition of 
$\Mix^{\Phi_\eta}_{\ell,x}$ involving $\eta$ is 
$D_{\Phi_\eta} = \frac{1}{\eta}D_\Phi$.
The quantification over $A,\mu,\acth, \mu'$ and $x$ in the original definition
have been translated into infima and suprema.
\begin{lemma}\label{lem:scaling}
	The function
	\(
		M(\eta) 
		:=
		\inf_{A,\mu} \sup_{\acth} \inf_{\mu', x} \;
		\Mix^{\Phi_\eta}_{\ell,x}(A,\mu) - \ell_x(\acth)
	\)
	is non-increasing.
\end{lemma}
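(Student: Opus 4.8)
The plan is to peel the definition of $M$ back to its integrand and observe that the \emph{only} dependence on $\eta$ enters through the Bregman term, which scales like $\eta^{-1}$ and is nonnegative; monotonicity in $\eta$ then survives every layer of $\inf$ and $\sup$. First I would record the scaling of the divergence: since $\Phi_\eta = \eta^{-1}\Phi$ gives $\D\Phi_\eta = \eta^{-1}\D\Phi$, substituting into \eqref{eq:bregman-def} yields $D_{\Phi_\eta}(\mu',\mu) = \eta^{-1} D_\Phi(\mu',\mu)$ for all $\mu' \in \DT$ and $\mu \in \ri(\DT)$. Because $\ell_x(\acth)$ does not depend on $\mu'$, the infimum over $\mu'$ hidden inside $\Mix^{\Phi_\eta}_{\ell,x}$ commutes with the subtraction, so
\[
	\inf_{\mu',x}\; \Mix^{\Phi_\eta}_{\ell,x}(A,\mu) - \ell_x(\acth)
	=
	\inf_{\mu',x}\Big[\inner{\mu',\ell_x(A)} + \eta^{-1}D_\Phi(\mu',\mu) - \ell_x(\acth)\Big],
\]
where now $\mu'$ ranges freely over $\DT$. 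This exhibits $M$ as a nested $\inf/\sup$ of the explicit integrand $g(\eta) := \inner{\mu',\ell_x(A)} + \eta^{-1}D_\Phi(\mu',\mu) - \ell_x(\acth)$.

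Next I would show that $g$ is non-increasing in $\eta$ for every fixed choice of $A,\mu,\acth,\mu',x$. Since $\Phi$ is convex, the first-order convexity inequality gives $D_\Phi(\mu',\mu) \ge 0$, so the map $\eta \mapsto \eta^{-1}D_\Phi(\mu',\mu)$ is non-increasing on $(0,\infty)$ (its derivative $-\eta^{-2}D_\Phi(\mu',\mu)$ is $\le 0$), while the remaining two terms are constant in $\eta$. Hence $g$ is non-increasing in $\eta$ pointwise.

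It then remains to pass this monotonicity through the operators $\inf_{A,\mu}$, $\sup_{\acth}$, $\inf_{\mu',x}$. The key elementary fact I would invoke is that both $\inf$ and $\sup$ over an arbitrary index set preserve pointwise monotonicity in a parameter: if $\{g_i(\cdot)\}_{i\in I}$ are each non-increasing, then for $\eta_1 \le \eta_2$ and every $i$ we have $g_i(\eta_2)\le g_i(\eta_1)$, whence $\inf_i g_i(\eta_2) \le \inf_i g_i(\eta_1)$ and likewise $\sup_i g_i(\eta_2)\le \sup_i g_i(\eta_1)$. Applying this fact successively to the three layers of quantifiers shows $M(\eta_2)\le M(\eta_1)$, that is, $M$ is non-increasing.

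The argument has no genuine obstacle: the content is entirely the scaling identity $D_{\Phi_\eta} = \eta^{-1}D_\Phi$ together with the nonnegativity $D_\Phi\ge 0$. The only point requiring a little care is the bookkeeping of the two infima over $\mu'$ (the one inside $\Mix$ and the one written explicitly in the definition of $M$); once these are identified as a single free infimum---justified precisely because $\ell_x(\acth)$ is $\mu'$-independent---the monotonicity-preservation of $\inf$ and $\sup$ does the rest.
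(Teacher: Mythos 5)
Your proposal is correct and follows essentially the same route as the paper's (very terse) argument: the paper likewise observes that the only $\eta$-dependence sits in $D_{\Phi_\eta} = \eta^{-1}D_\Phi$, which is non-increasing in $\eta$ since $D_\Phi \ge 0$, and that the quantifiers (infima and suprema) preserve this pointwise monotonicity. Your write-up simply makes explicit the details the paper leaves implicit, including the harmless identification of the infimum over $\mu'$ inside $\Mix^{\Phi_\eta}_{\ell,x}$ with the one written in the definition of $M$.
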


This implies that there is a well-defined maximal $\eta > 0$ for which a given
loss $\ell$ is $\Phi_\eta$-mixable since $\Phi_\eta$-mixability is equivalent
to $M(\eta) \ge 0$.
We will call this maximal $\eta$ the \emph{$\Phi$-mixability constant} for $\ell$ 
and denote it $\eta(\ell,\Phi) := \sup \{ \eta > 0 : M(\eta) \ge 0 \}$. 
This constant is central to the discussion in Section~\ref{sub:regret} below.



\subsection{$\Phi$-Mixability of Proper Losses and Their Bayes Risks} 

Entropies are known to be closely related to the Bayes risk of what are
called proper losses or proper scoring rules~\cite{Dawid:2007,Gneiting:2007}.
Here, the predictions are distributions over outcomes, \ie, points in $\DX$.
To highlight this we will use $p$, $\phat$ and $P$ instead of $a$, $\acth$ 
and $A$ to denote actions.
If a loss $\ell : \DX \to \R^X$ is used to assign a
penalty $\ell_x(\phat)$ to a prediction $\phat$ upon outcome $x$
it is said to be \emph{proper} if its expected value under $x\sim p$
is minimized by predicting $\phat = p$. 
That is, for all $p, \phat \in \DX$ we have
\[
	\E{x\sim p}{\ell_x(\phat)}
	=
	\inner{p,\ell(\phat)} 
	\ge
	\inner{p,\ell(p)}
	=: -F^\ell(p)
\]
where $-F^\ell$ is the \emph{Bayes risk} of $\ell$ and is necessarily
concave~\cite{Erven:2012}, thus making $F^\ell : \DX\to\R$ convex
and thus an entropy.
The correspondence also goes the other way: given any convex function 
$F : \DX\to\R$ we can construct a unique proper loss~\cite{Vernet:2011}.
The following representation can be traced back to ~\cite{Savage:1971} but
is expressed here using convex duality.

\begin{lemma}\label{lem:proper-loss}
	If $F:\DX\to\R$ is a differentiable entropy then the loss
	$\ell^F :\DX\to\R$ defined by
	\begin{equation}\label{eq:proper-loss}
		\ell^F(p)
		:=
		F^*(\D F(p))\ones - \D F(p)
	\end{equation}
	is proper.
\end{lemma}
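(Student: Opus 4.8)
The plan is to verify properness directly from the definition: I must show that $\inner{p, \ell^F(\phat)} \ge \inner{p, \ell^F(p)}$ for all $p, \phat \in \DX$, with the minimum attained at $\phat = p$. First I would substitute the definition \eqref{eq:proper-loss} and exploit the fact that the prediction target lives on the simplex. Since $\inner{p, \ones} = 1$ for every $p \in \DX$, the constant vector $F^*(\D F(\phat))\ones$ contributes only the scalar $F^*(\D F(\phat))$, giving
\[
	\inner{p, \ell^F(\phat)} = F^*(\D F(\phat)) - \inner{p, \D F(\phat)}.
\]
This collapse of the $\ones$-term is precisely why the representation is stated on a probability simplex, and it is the step that makes the rest of the argument go through cleanly.

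Next I would bring in the Fenchel equality. Applying \eqref{eq:conjugate-sup} to $F$ (via $F = (F^*)^*$ together with the attainment property recorded just before it) yields $F^*(\D F(\phat)) = \inner{\phat, \D F(\phat)} - F(\phat)$. Substituting this and regrouping the linear terms gives
\[
	\inner{p, \ell^F(\phat)} = -F(\phat) - \inner{p - \phat, \D F(\phat)}.
\]
Comparing the right-hand side with the definition \eqref{eq:bregman-def} of the Bregman divergence, I recognize it as $D_F(p, \phat) - F(p)$, so that $\inner{p, \ell^F(\phat)} = D_F(p, \phat) - F(p)$.

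Finally, since $F$ is convex its Bregman divergence satisfies $D_F(p, \phat) \ge 0$, with equality when $\phat = p$. Hence $\inner{p, \ell^F(\phat)} \ge -F(p) = \inner{p, \ell^F(p)}$, which is exactly the properness inequality, with the minimum attained (and the Bayes risk of $\ell^F$ identified as $-F$) at $\phat = p$.

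The one place requiring care --- the main obstacle --- is justifying the Fenchel equality $F^*(\D F(\phat)) = \inner{\phat, \D F(\phat)} - F(\phat)$, \ie\ that the supremum defining $F^*$ at $v = \D F(\phat)$ is attained exactly at $\phat$. This amounts to $\phat = \D F^*(\D F(\phat))$, which follows from differentiability of $F$ together with the attainment property of entropic duals stated before \eqref{eq:conjugate-sup}; one must also keep track that the supremum is taken over $\DX$ rather than all of $\R^X$, so that it is the \emph{entropic} dual version of the identity that is used. A secondary domain point is that $\D F(\phat)$ and $D_F(\,\cdot\,, \phat)$ require $\phat$ in the (relative) interior of $\DX$, consistent with the differentiability hypothesis. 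Everything else is routine rearrangement.
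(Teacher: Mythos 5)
Your proof is correct and follows essentially the same route as the paper's: both apply the Fenchel attainment identity \eqref{eq:conjugate-sup} to $F$ and reduce the properness inequality to non-negativity of the Bregman divergence $D_F(p,\phat)$ (the paper computes the difference $\inner{p,\ell^F(\phat)} - \inner{p,\ell^F(p)} = D_F(p,\phat)$ directly, while you equivalently write $\inner{p,\ell^F(\phat)} = D_F(p,\phat) - F(p)$). Your added remarks on the entropic-dual version of the Fenchel equality and on domain/interior issues are sound refinements of the same argument, not a different one.
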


It is straight-forward to show that the proper loss associated with the
negative Shannon entropy $\Phi = H$ is the log loss, that is,
$\ell^{H}(\mu) := -\left(\log \mu(\theta)\right)_{\theta\in\Theta}$.

This connection between losses and entropies lets us define the 
$\Phi$-mixability of a proper loss strictly in terms of its associated entropy.
This is similar in spirit to the result in \cite{Erven:2012} which shows that
the original mixability (for $\Phi = H$) can be expressed in terms of the
relative curvature of Shannon entropy and the loss's Bayes risk.
We use the following definition to explore the optimality of Shannon mixability
in Section~\ref{sub:regret} below.

\begin{definition}\label{def:mix-F}
	An entropy $F : \Delta_X \to \R$ is \emph{$\Phi$-mixable} if
	\begin{equation}\label{eq:mix-F}
		\sup_{P,\mu} \;
			F^*\left(
				\left\{ \Phi^*(\D\Phi(\mu) - \ell^F_x(P)) \right\}_x
				- \Phi^*(\D\Phi(\mu))\ones
			\right)
		\le 0
	\end{equation}
	where $\ell^F$ is as in Lemma~\ref{lem:proper-loss} and the supremum is 
	over expert predictions $P \in \DX^\Theta$ and
	mixtures over experts $\mu\in\DT$.
\end{definition}

Although this definition appears complicated due to the handling of vectors
in $\R^X$ and $\R^\Theta$, it has a natural interpretation in terms of 
\emph{risk measures} from mathematical finance \cite{Follmer:2004}.
Given some convex function $\alpha : \Delta_X \to \R$, its associated risk
measure is its dual $\rho(v) := \sup_{p\in\DX} \inner{p,-v} - \alpha(p) = \alpha^*(-v)$ where $v$ is a \emph{position} meaning $v_x$ is some monetary value 
associated with outcome $x$ occurring.
Due to its translation invariance, the quantity $\rho(v)$ is often interpreted as 
the amount of ``cash'' (\ie, outcome independent value) an agent would ask for to 
take on the uncertain position $v$. 
Observe that the risk $\rho^F$ for when $\alpha = F$ satisfies 
$\rho^F\circ\ell^F = 0$ so that $\ell^F(p)$ is always a $\rho^F$-risk free
position.
If we now interpret $\mu^* = \D\Phi(\mu)$ as a position over outcomes in 
$\Theta$ and $\Phi^*$ as a risk for $\alpha = \Phi$ the term 
$\left\{ \Phi^*(\mu^* - \ell^F_x(P)) \right\}_x - \Phi^*(\mu^*)\ones$
can be seen as the change in $\rho^\Phi$ risk when shifting position 
$\mu^*$ to $\mu^* - \ell^F_x(P)$ for each possible outcome $x$.
Thus, the mixability condition in \eqref{eq:mix-F} can be viewed as a
requirement that a $\rho^F$-risk free change in positions over $\Theta$ always 
be $\rho^F$-risk free.

The following theorem shows that the entropic version of $\Phi$-mixability 
Definition~\ref{def:mix-F} is equivalent to the loss version in Definition~\ref{def:mix} in the case of proper losses.
Its proof can be found in Appendix~\ref{app:mix-proper}
and relies on Sion's theorem and the facts that proper losses are quasi-convex. 
This latter fact appears to be new so we state it here as a separate lemma
and prove it in Appendix~\ref{app:lemmas}.

\begin{lemma}\label{lem:proper-qc}
	If $\ell : \DX \to \R$ is proper then $p' \mapsto \inner{p, \ell(p')}$
	is quasi-convex for all $p \in \DX$.
\end{lemma}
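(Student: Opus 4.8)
The plan is to reduce the claim to a statement about Bregman divergences and then attack that via sublevel sets. First I would invoke the loss--entropy correspondence: by Lemma~\ref{lem:proper-loss} every proper $\ell$ on $\DX$ can be written as $\ell = \ell^F$ for its convex Bayes-risk entropy $F = F^\ell$. Substituting the representation \eqref{eq:proper-loss} and using $\inner{p,\ones}=1$ for $p\in\DX$ gives $\inner{p,\ell^F(p')} = F^*(\nabla F(p')) - \inner{p,\nabla F(p')}$, and the Fenchel identity \eqref{eq:conjugate-sup} (applied at $v=\nabla F(p')$, where the supremum is attained at $p'$) turns this into
\begin{equation*}
	\inner{p,\ell^F(p')} = \inner{p'-p,\nabla F(p')} - F(p') = D_F(p,p') - F(p).
\end{equation*}
Since $F(p)$ does not depend on $p'$, the lemma is equivalent to the assertion that the Bregman divergence $p'\mapsto D_F(p,p')$ is quasi-convex in its \emph{second} argument.

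To prove that, I would use the fact that a function is quasi-convex exactly when its restriction to every line segment is unimodal, \ie, its one-dimensional derivative changes sign at most once. A direct computation gives $\nabla_{p'} D_F(p,p') = \nabla^2 F(p')(p'-p)$, so along a segment $p'+sd$ with $d$ tangent to the simplex the derivative is $d^\top \nabla^2 F(p'+sd)\,(p'+sd-p)$; the target is to show this passes from nonpositive to nonnegative values exactly once, the unique stationary point being forced to be the minimiser $p'=p$ by nonnegativity of $D_F$.

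The main obstacle is precisely this last step. Unlike convexity in the first argument, which is automatic, second-argument quasi-convexity of Bregman divergences is \emph{not} a formal consequence of convexity of $F$: because the Hessian $\nabla^2 F(p'+sd)$ varies along the segment while $p$ sits off it, the sign of the derivative above need not be monotone in $s$ for a general convex potential. The proof must therefore exploit the simplex/entropy structure rather than convexity alone. The route I would pursue is to pass to the dual divergence, $D_F(p,p') = D_{F^*}(\nabla F(p'),\nabla F(p))$, where $v\mapsto D_{F^*}(v,\nabla F(p))$ \emph{is} convex, and then transport this convexity back through the change of variables $v=\nabla F(p')$ using the translation invariance of the entropic dual from Lemma~\ref{lem:prob-duals}; the technical heart is controlling this nonlinear reparametrisation so that convex sublevel sets in $v$ remain convex in $p'$. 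An alternative lever is the anti-monotonicity $\inner{p_1'-p_0',\ell(p_1')-\ell(p_0')}\le 0$ enjoyed by proper losses, which encodes the monotonicity of $\nabla F$ and may be combined with the first-order quasi-convexity criterion.
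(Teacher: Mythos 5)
Your reduction is sound as far as it goes: using \eqref{eq:proper-loss} and \eqref{eq:conjugate-sup} one indeed gets $\inner{p,\ell^F(p')} = D_F(p,p') - F(p)$ (this is essentially the computation in the paper's proof of Lemma~\ref{lem:proper-loss}), so for losses of the form $\ell^F$ the lemma is equivalent to quasi-convexity of $p'\mapsto D_F(p,p')$ on $\DX$. You are also right that this second-argument quasi-convexity is the genuine difficulty and that it does not follow from convexity of $F$ alone. But that is exactly where your proposal stops: neither of the two routes you sketch is carried out, and the first faces an obstruction that is structural, not merely ``technical''. Writing $D_F(p,p') = D_{F^*}(\nabla F(p'),\nabla F(p))$ and invoking first-argument convexity of $D_{F^*}$ expresses the sublevel set $\{p' : D_F(p,p')\le\alpha\}$ as the image under the map $\nabla F^*$ of a convex set in dual space; but $\nabla F^*$ is nonlinear, and images (or preimages) of convex sets under nonlinear monotone maps need not be convex. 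The translation invariance from Lemma~\ref{lem:prob-duals} only quotients out the $\ones$ direction; it does nothing to linearize the map in the remaining directions, so there is no identified mechanism by which convexity ``transports back''. The second lever (anti-monotonicity of proper losses plus a first-order criterion) is likewise only named, never used: the sign pattern of $s\mapsto d^\top\nabla^2 F(p'+sd)\,(p'+sd-p)$ is precisely what can fail to be single-crossing for a general convex potential in dimension at least two, so no conclusion can be drawn without a genuinely new idea. A further, smaller issue: Lemma~\ref{lem:proper-loss} goes from a differentiable $F$ to a proper loss $\ell^F$, not the other way around, so writing an arbitrary proper $\ell$ as $\ell^F$ already presumes differentiability of its Bayes risk, a restriction the paper's argument does not need.

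For contrast, the paper's proof avoids Bregman divergences entirely and argues geometrically. It fixes the sublevel set $F_p^\alpha = \{q\in\DX : \inner{p,\ell(q)}\le\alpha\}$, notes that propriety makes each $\ell(q)$ a point where the superprediction set $\mathcal{S}_\ell$ is supported by the hyperplane with normal $q$, forms the convex set $V_p^\alpha = \bigcap_{q\in F_p^\alpha} H_q^{g(q)}$ from the corresponding halfspaces, and then identifies $F_p^\alpha$ with the set of directions in $\DX$ along which $V_p^\alpha$ admits a finite support value. Convexity of $F_p^\alpha$ then follows because the domain of finiteness of a support function of a convex set is convex \cite[Theorem 5.1]{Valentine:1964}. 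That support-function step is, in effect, the replacement for the step missing from your proposal; to salvage your approach you would need an actual proof that $D_F(p,\cdot)$ is quasi-convex on the simplex, which is just the lemma restated.
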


\begin{theorem}\label{thm:mix-proper}
	If $\ell : \DX \to \R^X$ is proper and has Bayes risk $-F$ then $F$ is an
	entropy and $\ell$ is $\Phi$-mixable if and only if $F$ is $\Phi$-mixable.
\end{theorem}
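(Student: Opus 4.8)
The plan is to strip off the outer quantifier over expert predictions and mixtures that both definitions share, and reduce the equivalence to a single minimax identity. Fix $P \in \DX^\Theta$ and $\mu\in\DT$ and let $v \in \R^X$ be the vector with components $v_x := \Phi^*(\D\Phi(\mu) - \ell^F_x(P)) - \Phi^*(\D\Phi(\mu))$, so that by Lemma~\ref{lem:alt-mix} we have $v_x = -\Mix^\Phi_{\ell,x}(P,\mu)$. With this notation the two conditions become transparent. By Definition~\ref{def:mix} together with Lemma~\ref{lem:alt-mix}, $\ell^F$ is $\Phi$-mixable exactly when, for every $P$ and $\mu$, there is a $\phat\in\DX$ with $\ell^F_x(\phat) + v_x \le 0$ for all $x$; since $\DX$ is compact and $\phat \mapsto \max_x[\ell^F_x(\phat)+v_x]$ is lower semicontinuous the infimum is attained, so this is equivalent to $\inf_{\phat \in \DX}\max_{x\in X}[\ell^F_x(\phat)+v_x]\le 0$. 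The argument of $F^*$ in Definition~\ref{def:mix-F} is precisely $v$, so $F$ is $\Phi$-mixable exactly when $F^*(v)\le 0$ for every $P$ and $\mu$. As both statements carry the same outer quantifier over $P,\mu$, it suffices to prove, for every $v\in\R^X$, the identity
\[
	\inf_{\phat \in \DX}\max_{x\in X}\left[\ell^F_x(\phat)+v_x\right] = F^*(v).
\]

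To establish the identity I would first rewrite the inner maximum as the maximum of a linear functional over the simplex, $\max_x[\ell^F_x(\phat)+v_x] = \max_{p\in\DX}\inner{p,\ell^F(\phat)+v}$, using that a linear function on $\DX$ is maximized at a vertex. The left-hand side is then $\inf_{\phat}\sup_{p} g(\phat,p)$ with $g(\phat,p) := \inner{p,\ell^F(\phat)} + \inner{p,v}$, and I would exchange the infimum and supremum by Sion's minimax theorem. The hypotheses hold because $g$ is linear—hence concave and continuous—in $p$ for each $\phat$, and $\phat \mapsto g(\phat,p)$ is quasi-convex for each $p$ by Lemma~\ref{lem:proper-qc} (the term $\inner{p,v}$ is constant in $\phat$), while both copies of $\DX$ are compact and convex. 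The exchange yields $\sup_{p\in\DX}\inf_{\phat\in\DX}[\inner{p,\ell^F(\phat)}+\inner{p,v}]$.

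The last step evaluates the inner infimum by properness: since $\ell^F$ is proper, $\inner{p,\ell^F(\phat)}\ge\inner{p,\ell^F(p)}=-F(p)$ with equality at $\phat=p$, so $\inf_{\phat}\inner{p,\ell^F(\phat)}=-F(p)$. Substituting gives $\sup_{p\in\DX}[\inner{p,v}-F(p)]=F^*(v)$, which is the claimed identity; combined with the reduction above this proves that Definition~\ref{def:mix} and Definition~\ref{def:mix-F} agree for proper losses. That $F$ is itself an entropy is immediate from the discussion preceding the theorem, since the Bayes risk $-F$ is concave and hence $F$ is convex, proper, and lower semicontinuous on $\DX$.

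I expect the main obstacle to be the careful verification of the hypotheses of Sion's theorem, and this is exactly the role of Lemma~\ref{lem:proper-qc}: without quasi-convexity of $\phat\mapsto\inner{p,\ell^F(\phat)}$ the minimax exchange fails. A secondary point needing care is the regularity of $\ell^F$ on the boundary of $\DX$, where it may be unbounded (as for log loss); one must invoke a version of Sion's theorem allowing extended-real-valued lower semicontinuous objectives and confirm that the infimum over $\phat$ is attained in the relative interior, so that the ``there exists $\phat$'' phrasing of Definition~\ref{def:mix} genuinely matches $\inf_{\phat}\max_x[\cdots]\le 0$, including the boundary case where this value is exactly zero.
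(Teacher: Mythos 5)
Your proposal is correct and takes essentially the same approach as the paper's proof: both convert the worst-case outcome into a supremum over $p\in\DX$, exchange the order of optimization via Sion's theorem using the quasi-convexity of proper losses (Lemma~\ref{lem:proper-qc}), and then evaluate the inner optimization by properness so that the result is recognized as $F^*$. The only difference is organizational: you fix $P,\mu$ and prove a clean pointwise identity with the translation invariance of $F^*$ absorbed into the definition of $v$, whereas the paper carries the quantifiers $\inf_{P,\mu}$ throughout and invokes translation invariance explicitly in the final step.
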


The entropic form of mixability in \eqref{eq:mix-F} shares some similarities with 
expressions for
the classical mixability constants given in \cite{Haussler:1998} for binary 
outcome games and in \cite{Erven:2012} for general games.
Our expression for the mixability is more general than the previous two 
being both for binary and non-binary outcomes and for general entropies.
It is also arguably more efficient since the optimization in \cite{Erven:2012}
for non-binary outcomes requires inverting a Hessian matrix at each point in the 
optimization.


\subsection{Characterizing and Comparing $\Phi$-mixability} 

Although Theorem~\ref{thm:main} recovers the already known constant regret 
bound for Shannon-mixable losses, it is natural to ask whether the result is 
vacuous or
not for other entropies. 
That is, do there exist $\Phi$-mixable losses for $\Phi$ other than Shannon
entropy?
If so, do such $\Phi$-mixable losses exist for any entropy $\Phi$?
The next theorem answers both of these questions, showing that the existence
of ``non-trivial'' $\Phi$-mixable losses is intimately related to the behaviour 
of an entropy's gradient at the simplex's boundary.
Specifically, an entropy $\Phi$ is said to be \emph{Legendre} 
\cite{Rockafellar:1997} if: a) $\Phi$ is strictly convex  
in $\interior(\Delta_\Theta)$; and b) 
$\|\D \Phi(\mu)\| \to \infty$ as $\mu \to \mu_b$ for any $\mu_b$ on the boundary
of $\Delta_\Theta$.

We will say a loss is \emph{non-trivial} if there exist distinct actions which
are optimal for distinct outcomes (see \ref{app:legendre} for formal definition).
This, for example, rules out constant losses -- \ie, $\ell(a) = k \in \R^X$ for all $a\in\acts$ -- are easily\footnote{
	The inequality in \eqref{eq:mixability} reduces to 
	$0 \le \inf_{\mu'} D_\Phi(\mu',\mu)$ which is true for all 
	Bregman divergences.
}
seen to be $\Phi$-mixable for any $\Phi$.
For technical reasons we will further restrict our attention to \emph{curved} 
losses by which we mean those losses with strictly concave Bayes risks.
We conjecture that the following theorem also holds for non-curved losses.

\begin{theorem}\label{thm:legendre}
	There exist non-trivial, curved $\Phi$-mixable losses 
	if and only if 
	the entropy $\Phi$ is Legendre.
\end{theorem}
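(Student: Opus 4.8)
The plan is to prove the two implications separately, in both cases reducing to proper losses. Since the theorem restricts to curved losses (strictly concave Bayes risk), by the correspondence of Lemma~\ref{lem:proper-loss} every such loss is, up to the choice of substitution prediction, a proper loss $\ell^F$ whose associated entropy $F$ is strictly convex, and by Theorem~\ref{thm:mix-proper} its $\Phi$-mixability is equivalent to the entropic condition~\eqref{eq:mix-F}. Throughout I would work with the dual form of the bound from Lemma~\ref{lem:alt-mix}, writing $v := \D\Phi(\mu)$ so that, as $\mu$ ranges over $\ri(\DT)$, $v$ ranges over $V := \D\Phi(\ri(\DT))$, and the condition becomes: for every expert matrix and every $v \in V$ there is an $\acth$ with $\ell_x(\acth) \le \Phi^*(v) - \Phi^*(v - \ell_x(A))$ for all $x$. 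I would also use the scaling equivalence implicit in Lemma~\ref{lem:scaling} and Lemma~\ref{lem:prob-duals}, namely that $\ell$ is $\Phi_\eta$-mixable iff $\eta\,\ell$ is $\Phi$-mixable, so that scaling a loss neither changes whether a witnessing loss exists nor destroys non-triviality or curvedness.

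For the direction Legendre $\Rightarrow$ existence, it suffices to exhibit a single non-trivial curved loss with strictly positive $\Phi$-mixability constant $\eta(\ell,\Phi)$, since then $\eta(\ell,\Phi)\,\ell$ is $\Phi$-mixable. I would take $\ell = \ell^F$ for a strictly convex $F$ (a quadratic entropy, or $F = \Phi$ itself), which is non-trivial because distinct outcomes have distinct minimisers $\delta_x$. The heart of the argument is a second-order comparison: expanding $\Phi^*(v) - \Phi^*(v - \ell_x(A))$ gives the first-order term $\inner{\mu, \ell_x(A)}$ plus a curvature term governed by $\D^2\Phi^*(v) = (\D^2\Phi(\mu))^{-1}$, so positivity of the mixability gap at small loss scale reduces to a Hessian domination of the form $\D^2\Phi \succeq c\,\D^2 F$ holding uniformly over $\ri(\DT)$. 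Legendre-ness enters precisely to control the boundary: because $\|\D\Phi(\mu)\| \to \infty$ as $\mu \to \d\DT$, the range $V$ is all of $\DT^*$, and as $v$ escapes toward a vertex the bound degenerates to $\ell_x(\acth) \le \ell_x(A_{\theta})$ for the dominant expert $\theta$, which is trivially met; this is what keeps the gap from going negative near the boundary and yields $\eta(\ell,\Phi) > 0$.

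For the converse I would argue the contrapositive: if $\Phi$ is not Legendre then no non-trivial curved loss is $\Phi$-mixable. There are two failure modes. If $\Phi$ is not strictly convex on $\interior(\DT)$, then $\Phi$ is affine along some direction $d$, so $D_\Phi(\mu',\mu) = 0$ along that segment and the infimum defining $\Mix^\Phi_{\ell,x}$ is driven to the endpoint determined by the sign of $\inner{d,\ell_x(A)}$; for a curved non-trivial loss one can choose experts so that different outcomes $x$ force the minimiser to opposite ends, so that no single $\acth$ meets the bound for all $x$. If instead condition (b) fails, so $\D\Phi$ stays bounded with $\D\Phi(\mu) \to v_b$ finite as $\mu \to \mu_b$, then $V$ is a proper subset of $\DT^*$ and $\Phi^*$ is flat (its supremum pinned at $\mu_b$) in the directions leaving $V$ past $v_b$; evaluating the inequality along $\mu \to \mu_b$ shows that $\Phi^*(v_b) - \Phi^*(v_b - \ell_x(A))$ cannot decrease fast enough to dominate the spread of a curved loss's values, which I would make precise by choosing experts, using non-triviality, whose loss vectors point into the flat directions.

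The step I expect to be the main obstacle is this converse, because the violating configuration must be produced against an \emph{arbitrary} non-trivial curved loss rather than a single chosen one, so the argument has to extract usable loss vectors from the bare hypotheses of non-triviality and strict concavity of the Bayes risk. Here I would lean on the entropic reformulation~\eqref{eq:mix-F} together with the quasi-convexity of proper losses (Lemma~\ref{lem:proper-qc}) and a Sion minimax exchange, exactly as in Theorem~\ref{thm:mix-proper}, to convert the existential-over-$\acth$ structure into a scalar inequality whose failure at the degenerate boundary can be read off directly; certifying that this failure is genuinely unavoidable for every curved loss (and leaving the non-curved case as a conjecture) is where the real work lies.
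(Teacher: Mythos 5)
Your high-level decomposition (two directions; for non-Legendre $\Phi$ the two failure modes of strict convexity and of gradient blowup at the boundary) matches the paper's, but in both directions the step where the actual work lies is either missing or would fail. In the existence direction, your central claim---that positivity of the mixability gap reduces to a Hessian domination $\D^2\Phi \succeq c\,\D^2 F$ uniformly over $\ri(\DT)$---does not typecheck: $\Phi$ is an entropy on $\DT$ and $F$ an entropy on $\DX$, which in general have different dimensions; their Hessians act on different spaces and are coupled only through the loss matrix $\ell_x(A)\in\R^\Theta$, so no direct domination statement captures the condition (this is exactly why the paper's entropic reformulation \eqref{eq:mix-F} has the nested $F^*$-of-$\Phi^*$ structure rather than a curvature comparison). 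Even waiving that, strict convexity of a Legendre $\Phi$ gives no uniform positive lower bound on $\D^2\Phi$ over the non-compact set $\ri(\DT)$, and your boundary claim (that the bound ``degenerates to $\ell_x(\acth)\le\ell_x(A_\theta)$ for the dominant expert'') is only a heuristic near a vertex; near a face several experts retain mass. The paper avoids all of this with a concrete witness: take squared loss $\ell_x(p)=\|p-\delta_x\|^2$, let the learner play the mean $\bar p=\sum_\theta\mu_\theta P_\theta$ (so the linear term is already beaten by Jensen), and use strict convexity only to guarantee $D_\Phi(\mu',\mu)>0$ away from $\mu$, choosing $\eta$ small enough that the Bregman term dominates the bounded loss; gradient blowup is used only to force $\pi$ into $\ri(\DT)$.

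In the converse, your sketch for failure mode (1) (``different outcomes force the minimiser to opposite ends'') is the right idea but is precisely the step that needs a construction: the paper proves Lemma~\ref{lem:phi-mix-bayes-risk}, showing that for any distinct $\mu^*,\mu'$ one can build a single expert matrix $A$ and outcomes $x^*,x'$ such that every $\acth$ strictly loses against one of the two mixtures, the proof being a midpoint/Jensen argument in which a hedging $\acth$ would contradict strict concavity of the Bayes risk; this is then applied to a pair with $D_\Phi(\mu^*,\mu')=0$. For failure mode (2), the paper's key quantitative ingredient is one-dimensional: with $f(\alpha)=\Phi(\pi+\alpha(\delta_{\theta^*}-\pi))$ and $f'(0)$ finite, $D_f(\alpha,0)=o(\alpha)$, which loses against the linear gain $\alpha\left(\ell_{x^*}(a')-\ell_{x^*}(a^*)\right)>0$ furnished by non-triviality. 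Your dual picture ($\Phi^*$ flat past $v_b$) is a plausible rephrasing, but you give no such rate argument, and the detour through Theorem~\ref{thm:mix-proper} and Sion's theorem does not produce the violating configuration---the paper's converse works directly with an arbitrary non-trivial curved loss via the $M(\eta)$ formulation \eqref{eq:three-term-mixability}. Finally, your opening reduction of every curved loss to a proper one is itself an unproved step: Lemma~\ref{lem:proper-loss} constructs a proper loss from an entropy, not a proper loss equivalent to an arbitrary curved loss, and the paper never needs such a reduction.
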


The proof is in Appendix~\ref{app:legendre}.
From this result we can deduce that there are no $Q$-mixable
losses. Also, since it is easy to show the derivatives $\D S_\alpha$ and $\D R_\alpha$ are unbounded for $\alpha \in (0,1)$, the entropies $S_\alpha$ and 
$R_\alpha$ are Legendre. Thus there exist $S_\alpha$- and $R_\alpha$-mixable  losses when $\alpha\in(-1,0)$.

\section{Conclusions and Open Questions} 

The main purpose of this work was to shed new light on mixability by 
casting it within the broader notion of $\Phi$-mixability.
We showed that the constant regret bounds enjoyed by mixability losses
are due to the translation invariance of entropic duals, and so are also 
enjoyed by any $\Phi$-mixable loss.
The definitions and technical machinery presented here allow us to ask
precise questions about entropies and the optimality of their associated 
aggregating algorithms.

\begin{amrnote}{}
	Summarize findings before leading into open questions.
\end{amrnote}
\begin{amrnote}{}
	Emphasize this work presenting mixability in a new light via convexity.
	Argue that it gives a much fuller picture of when constant regret bounds
	are possible and shows a strong connection between constant regret
	and Legendre functions (cf. barrier methods?)
\end{amrnote}

\subsection{Are All Legendre Entropies ``Equivalent''?}

Since Theorem~\ref{thm:legendre} shows the existence of $\Phi$-mixable losses, a natural question concerns the relationship between the sets of losses that
are mixable for different choices of $\Phi$.
For example, are there losses that are $H$-mixable but not $S_\alpha$-mixable,
or vice-versa?
We conjecture that essentially all Legendre entropies $\Phi$ 
have the same $\Phi$-mixable losses up to a scaling factor.

\begin{amrnote}{}
	Prove following or state as conjecture. I believe it is true.
	It is equivalent to showing that 
	$\Mix^{\Phi}_{\ell,x}(A_\eta, \mu_\eta)
	> 
	\eta^{-1}\Mix^{\Phi'}_{\eta\ell,x}(A_\eta, \mu_\eta)$
	yields a contradiction
	where $A_\eta$ and $\mu_\eta$ are the witnesses to the
	non-$\Phi'_\eta$-mixability of $\ell$ -- that is,
	$\inf_{A,\mu} \sup_{\acth} \inf_{x} \Mix^{\Phi'_\eta}_{\ell,x}(A,\mu) < 0$.
	Since the RHS looks like a directional derivative of $\Phi^*$, I expect
	the upper bound on the LHS should violate the Legendre assumption 
	on $\Phi'$.
	However, the analysis is made tricky by the dependence of $A_\eta$ and 
	$\mu_\eta$ on $\eta$ as $\eta \to 0$.
\end{amrnote}

\begin{conjecture}\label{con:equivalent}
	Let $\Phi$ be a entropy on $\DT$ and $\ell$ be a $\Phi$-mixable loss.
	If $\Psi$ is a Legendre entropy on $\DT$ then there
	exists an $\eta > 0$ such that $\ell$ is $\eta^{-1}\Psi$-mixable.
\end{conjecture}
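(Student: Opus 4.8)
The plan is to fix the given $\Phi$-mixable loss $\ell$ and the Legendre entropy $\Psi$ and to produce a single $\eta>0$ for which $\ell$ is $\eta^{-1}\Psi$-mixable; by the monotonicity of $M$ in Lemma~\ref{lem:scaling} this already yields $\eta'^{-1}\Psi$-mixability for all $\eta'\le\eta$. We may assume $\ell$ is non-trivial and curved, since trivial losses are $\Phi$-mixable for every $\Phi$ and the non-curved case would follow from the conjectured extension of Theorem~\ref{thm:legendre}; in the curved regime Theorem~\ref{thm:legendre} forces the given $\Phi$ to be Legendre, so both $\Phi^*$ and $\Psi^*$ are finite, smooth, and strictly convex modulo the $\ones$-direction. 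Combining Lemmas~\ref{lem:alt-mix} and~\ref{lem:prob-duals}, a short computation shows the slack below the convex-combination bound has a clean Bregman form,
\[
  \inner{\mu,\ell_x(A)}-\Mix^{\Phi}_{\ell,x}(A,\mu)
  = D_{\Phi^*}\!\bigl(\D\Phi(\mu)-\ell_x(A),\,\D\Phi(\mu)\bigr)\ge 0,
\]
and likewise $\inner{\mu,\ell_x(A)}-\Mix^{\Psi_\eta}_{\ell,x}(A,\mu)=\eta^{-1}D_{\Psi^*}\!\bigl(\D\Psi(\mu)-\eta\,\ell_x(A),\D\Psi(\mu)\bigr)$, which by convexity of $\Psi^*$ decreases to $0$ as $\eta\downarrow 0$. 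Thus $\Mix^{\Psi_\eta}_{\ell,x}(A,\mu)\uparrow\inner{\mu,\ell_x(A)}$, while $\Phi$-mixability supplies, for each $(A,\mu)$, a witness $\acth$ with $\ell_x(\acth)\le\Mix^{\Phi}_{\ell,x}(A,\mu)\le\inner{\mu,\ell_x(A)}$ for all $x$.

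For a \emph{fixed} $(A,\mu)$ with $\mu\in\ri(\DT)$ this closes immediately: strict convexity of $\Phi^*$ makes the $\Phi$-slack strictly positive whenever $\ell_x(A)\not\propto\ones$, while for $\ell_x(A)\propto\ones$ translation invariance makes $\Mix^{\Phi}$, $\Mix^{\Psi_\eta}$ and $\inner{\mu,\ell_x(A)}$ all equal; since $\Mix^{\Psi_\eta}_{\ell,x}(A,\mu)\uparrow\inner{\mu,\ell_x(A)}$, for $\eta$ small enough we get $\Mix^{\Psi_\eta}_{\ell,x}(A,\mu)\ge\Mix^{\Phi}_{\ell,x}(A,\mu)\ge\ell_x(\acth)$, so the same witness certifies $\Psi_\eta$-mixability at $(A,\mu)$. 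The whole content of the conjecture is therefore to make this threshold $\eta$ \emph{uniform} in $A$ and $\mu$.

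For the uniform statement I would argue by contradiction. If $\ell$ is not $\Psi_\eta$-mixable for any $\eta$, choose $\eta_n\downarrow 0$ with failure witnesses $A_n,\mu_n$; feeding $(A_n,\mu_n)$ into $\Phi$-mixability and combining with the failure produces an outcome $x_n$ at which the $\Phi$-slack is strictly smaller than the $\Psi_{\eta_n}$-slack, i.e.
\[
  D_{\Phi^*}\!\bigl(\D\Phi(\mu_n)-v_n,\D\Phi(\mu_n)\bigr)
  < \eta_n^{-1}\,D_{\Psi^*}\!\bigl(\D\Psi(\mu_n)-\eta_n v_n,\D\Psi(\mu_n)\bigr),
\]
where $v_n:=\ell_{x_n}(A_n)$. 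Passing to a subsequence with $\mu_n\to\mu_\infty\in\DT$, the right-hand side equals $\tfrac{\eta_n}{2}\tilde v_n^{\top}[\D^2\Psi(\mu_n)]^{-1}\tilde v_n+o(\eta_n\|\tilde v_n\|^2)$, where $\tilde v_n$ is the component of $v_n$ off $\ones$, so it tends to $0$ whenever the curvature stays bounded; the inequality then forces the $\Phi$-slack to $0$, and by strict convexity of $\Phi^*$ this forces $\tilde v_n\to 0$. When $\mu_\infty\in\ri(\DT)$ both $[\D^2\Phi(\mu_n)]^{-1}$ and $[\D^2\Psi(\mu_n)]^{-1}$ converge to fixed positive-definite forms on $\ones^{\perp}$; normalizing $\tilde v_n$ on the unit sphere, extracting a limit $\hat v_\infty$, and dividing by $\|\tilde v_n\|^2$ collapses the displayed inequality to $\hat v_\infty^{\top}[\D^2\Phi(\mu_\infty)]^{-1}\hat v_\infty\le 0$, contradicting strict convexity of $\Phi$.

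The hard part is the remaining case $\mu_\infty\in\partial\DT$, in which the witnesses escape to the boundary as $\eta_n\to 0$ — precisely the coupled limit that makes the analysis delicate. Here the Legendre hypothesis on $\Psi$ is indispensable: $\|\D\Psi(\mu_n)\|\to\infty$ and $\D^2\Psi(\mu_n)$ degenerates, so $[\D^2\Psi(\mu_n)]^{-1}\to 0$ and the $\Psi_{\eta_n}$-slack shrinks, which ought to make $\Psi_{\eta_n}$-mixability \emph{easier} rather than harder. The difficulty is that $\Phi^*$ flattens at the boundary as well, so the $\Phi$-slack can stay small without $\tilde v_n\to 0$, and one must compare the boundary blow-up rates of $\D^2\Phi$ and $\D^2\Psi$ against the possibly unbounded deviations $\tilde v_n$ to keep the two Bregman divergences comparable along the escaping sequence. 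I expect this simultaneous control of the curvature ratio and the witness directions in the coupled limit $(\eta_n,\mu_n)\to(0,\partial\DT)$ to be the main obstacle, and the point at which an entropy-independent argument is most likely to require extra regularity relating $\Phi$ and $\Psi$ near the boundary.
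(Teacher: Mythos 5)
This statement is an open conjecture in the paper (Conjecture~\ref{con:equivalent}): the authors give no proof, only an intuition paragraph and a margin note proposing precisely the strategy you adopt --- derive a contradiction from the witnesses $(A_{\eta_n},\mu_{\eta_n})$ to non-$\Psi_{\eta_n}$-mixability as $\eta_n \downarrow 0$ --- while flagging that ``the analysis is made tricky by the dependence of $A_\eta$ and $\mu_\eta$ on $\eta$ as $\eta \to 0$.'' So there is no proof to compare against, and your attempt must be judged on whether it closes the conjecture. It does not. Your preliminary work is sound: the slack identity $\inner{\mu,\ell_x(A)} - \Mix^{\Phi}_{\ell,x}(A,\mu) = D_{\Phi^*}\bigl(\D\Phi(\mu)-\ell_x(A),\,\D\Phi(\mu)\bigr)$ does follow from Lemma~\ref{lem:alt-mix} together with $\D\Phi^*\circ\D\Phi = \mathrm{id}$, the non-uniform statement for fixed $(A,\mu)$ with $\mu\in\ri(\DT)$ is correct, and the interior-limit branch of your contradiction argument is plausible --- though it silently assumes $\Phi$ and $\Psi$ are twice differentiable with nondegenerate Hessians on the subspace orthogonal to $\ones$, and that the second-order expansions hold uniformly in the possibly unbounded loss vectors $v_n$, none of which is granted by the conjecture's hypotheses.

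The genuine gap is the case you yourself label ``the hard part'': witnesses escaping to the boundary, $\mu_n \to \mu_\infty$ on the boundary of $\DT$, coupled with $\eta_n \to 0$ and possibly $\|v_n\| \to \infty$. There you offer only heuristics about competing blow-up rates of $\D^2\Phi$ and $\D^2\Psi$, and you concede that extra regularity relating $\Phi$ and $\Psi$ near the boundary may be needed --- a hypothesis the conjecture does not provide. Since you correctly observe that uniformity of the threshold $\eta$ over $(A,\mu)$ is ``the whole content of the conjecture,'' leaving this coupled boundary limit uncontrolled means nothing is proved; this is exactly the regime the authors identify as the obstruction, so your attempt reproduces their intuition and stalls at the same point. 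A secondary gap: your opening reduction handles non-curved $\ell$ by appeal to ``the conjectured extension of Theorem~\ref{thm:legendre},'' i.e., to another unproven statement, so even the step forcing $\Phi$ to be Legendre is conditional rather than established.
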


Some intuition for this conjecture is derived from observing that 
$\Mix^{\Psi_\eta}_{\ell,x} = \eta^{-1} \Mix^\Psi_{\eta\ell,x}$ and that 
as $\eta \to 0$ the function $\eta\ell$ behaves like a constant loss and 
will therefore be mixable. This means that scaling up $\Mix^\Psi_{\eta\ell,x}$
by $\eta^{-1}$ should make it larger than $\Mix^\Phi_{\ell,x}$.
However, some subtlety arises in ensuring that this dominance occurs uniformly.

\subsection{Asymptotic Behaviour}\label{sub:asymptotics} 

There is a lower bound due to Vovk \cite{Vovk:1995} for general losses
$\ell$ which shows that if one is allowed to vary the number of rounds
$T$ and the number of experts $K = |\Theta|$, then no regret bound can
be better than the optimal regret bound obtained by Shannon
mixability. Specifically, for a fixed loss $\ell$ with optimal Shannon
mixability constant $\eta_\ell$, suppose that for some $\eta' >
\eta_\ell$ we have a regret bound of the form $(\log K)/ \eta'$
as well as some strategy $L$ for the learner that supposedly
satisfies this regret bound. Vovk's lower bound shows, for this
$\eta'$ and $L$, that there exists an instantiation of the prediction
with expert advice game with $T$ large enough and $K$  roughly exponential in $T$ (and both are still finite) for which the alleged regret bound will fail to hold at the
end of the game with non-zero probability.
The regime in which Vovk's lower bound holds suggests that the best achievable regret with respect to the number of experts grows as $\log K$. Indeed, there is a lower bound for general losses $\ell$ that shows the regret of the best possible algorithm on games using $\ell$ must grow like $\Omega(\log_2 K)$ \cite{Haussler:1998}.

The above lower bound arguments apply when the number of experts is
large (\ie, exponential in the number of rounds) or if we consider the
dynamics of the regret bound as $K$ grows. This leaves open the
question of the best possible regret bound for moderate and possibly
fixed $K$ which we formally state in the next section.
This question that serves as a strong motivation for the study of generalized mixability considered here. Note also that the above lower bounds are consistent with the fact that there cannot be non-trivial, $\Phi$-mixable losses for non-Legendre $\Phi$ (\eg, the quadratic entropy $Q$) since the growth of the regret bound as a function of $K$ 
(cf. Example~\ref{ex:regrets}) is less than $\log K$ and hence violates the above lower bounds.




\subsection{Is There An ``Optimal'' Entropy?} \label{sub:regret} 

Since we believe that $\Phi$-mixability for Legendre $\Phi$ yield the same 
set of losses, we can ask whether, for a fixed loss $\ell$,
some $\Phi$ give better regret bounds than others.
These bounds depend jointly on the largest $\eta$ such that 
$\ell$ is $\Phi_\eta$-mixable and the value of $D_\Phi(\delta_\theta, \mu)$.
We can define the optimal regret bound
one can achieve for a particular loss $\ell$ using the generalized aggregating
algorithm with $\Phi_\eta := \tfrac 1 \eta \Phi$ for some $\eta > 0$.  This
allows us to compare entropies on particular losses, and we can say that an
entropy \emph{dominates} another if its optimal regret bound is better for all
losses $\ell$.
Recalling the definition of the maximal $\Phi$-mixability constant from Lemma~\ref{lem:scaling}, we can determine a quantity of more direct interest: 
the best regret bound one can obtain using a scaled copy of $\Phi$.  
Recall that if $\ell$ is $\Phi$-mixable, then the best regret bound one can
achieve from the generalized aggregating algorithm is $\inf_{\mu}\sup_{\theta}
D_\Phi(\delta_\theta,\mu)$.  
We can therefore define the best regret bound for $\ell$ on a scaled version of
$\Phi$ to be 
$R_{\ell,\Phi} := 
	\eta(\ell,\Phi)^{-1} \inf_{\mu}\sup_{\theta} D_\Phi(\delta_\theta,\mu)$ 
which simply corresponds to the regret bound for the entropy 
$\Phi_{\eta(\ell,\Phi)}$.  
Note a crucial property of $R_{\ell,\Phi}$, which will be very useful in 
comparing entropies: $R_{\ell,\Phi} = R_{\ell,\alpha\Phi}$ for all $\alpha>0$.  
(This follows from the observation that 
$\eta(\ell,\alpha\Phi) = \eta(\ell,\Phi)/\alpha$.)  
That is, $R_{\ell,\Phi}$ is independent of the particular scaling we 
choose for $\Phi$.

We can now use $R_{\ell,\Phi}$ to define a scale-invariant relation over entropies.  
Define
$\Phi \geq_\ell \Psi$ if $R_{\ell,\Phi} \leq R_{\ell,\Psi}$, and $\Phi
\geq_* \Psi$ if $\Phi \geq_\ell \Psi$ for all losses $\ell$.  
In the latter case we say 
$\Phi$
\emph{dominates} $\Psi$.
By construction, if one entropy dominates another its regret bound is guaranteed
to be tighter and therefore its aggregating algorithm will achieve better 
worst-case regret.
As discussed above, one natural candidate for a universally dominant entropy is the Shannon entropy.

\begin{conjecture}
  \label{quest:msr-1}
  For all choices of $\Theta$, the negative Shannon entropy dominates all 
  other entropies. That is, $H \geq_* \Phi$ for all
  $\Theta$ and all convex $\Phi$ on $\DT$.
\end{conjecture}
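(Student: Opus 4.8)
The plan is to turn the domination claim into one scalar inequality per loss, reduce that inequality to a local Hessian comparison via the proper-loss machinery, and then confront the trade-off that must be controlled. Write $B_\Phi := \inf_\mu\sup_\theta D_\Phi(\delta_\theta,\mu)$, so that $R_{\ell,\Phi} = \eta(\ell,\Phi)^{-1}B_\Phi$; since $D_H(\delta_\theta,\mu) = -\log\mu_\theta$ and $\sup_\mu\min_\theta\mu_\theta = 1/K$, we have $B_H = \log K$. Unfolding the definition of $\geq_*$, the statement $H\geq_*\Phi$ is exactly that
\begin{equation*}
	\frac{\eta(\ell,\Phi)}{\eta(\ell,H)}\;\le\;\frac{B_\Phi}{\log K}\qquad\text{for every loss }\ell.
\end{equation*}
The right-hand side does not involve $\ell$, so it suffices to bound $\sup_\ell\,\eta(\ell,\Phi)/\eta(\ell,H)$ by the purely geometric ratio $B_\Phi/\log K$.

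By Theorem~\ref{thm:mix-proper} it is enough to treat curved proper losses $\ell = \ell^F$ through the entropic condition of Definition~\ref{def:mix-F}. I would extract $\eta(\ell^F,\Phi)$ in the infinitesimal regime: substitute $\Phi\mapsto\Phi_\eta$, use $\Phi_\eta^*(v) = \eta^{-1}\Phi^*(\eta v)$ from Lemma~\ref{lem:prob-duals}, and expand in $\eta$. Two identities make the expansion collapse: $\D\Phi^*(\D\Phi(\mu)) = \mu$, and the risk-free identity $F^*(-\ell^F(p)) = 0$, which is translation invariance of $F^*$ applied to $\ell^F(p) = F^*(\D F(p))\ones - \D F(p)$. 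Taking experts $P_\theta = p + \epsilon d_\theta$ with $\sum_\theta\mu_\theta d_\theta = 0$ and collecting the leading $O(\epsilon^2)$ term, the requirement $F^*(\cdots)\le 0$ becomes the semidefinite inequality
\begin{equation*}
	\eta\,\bigl(\D^2\Phi^*(\D\Phi(\mu))\otimes C(p)\bigr)\;\preceq\;\bigl(\operatorname{diag}(\mu)\otimes\D^2F(p)\bigr)
\end{equation*}
on $\{d:\sum_\theta\mu_\theta d_\theta = 0\}$, where $C(p) = \D^2F(p)\,(\operatorname{diag}(p)-pp^\tr)\,\D^2F(p)$ is the loss-gradient covariance (using $\D\ell^F_x(p) = \D^2F(p)(p-e_x)$). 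Thus $\eta(\ell^F,\Phi)$ is the largest $\eta$ for which this holds across all $p\in\DX$ and $\mu\in\DT$.

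For Shannon, $\D^2H^*(\D H(\mu)) = \operatorname{diag}(\mu)-\mu\mu^\tr$, whose rank-one part annihilates $\mu$-centered $d$, so the left Kronecker factor effectively reduces to $\operatorname{diag}(\mu)$ and the condition becomes the $\mu$-free statement $\eta\,C(p)\preceq\D^2F(p)$; this recovers the classical constant and shows Shannon uniquely decouples the mixing point $\mu$ from the prediction point $p$. The remaining task is to show that for any other $\Phi$ the extra mixability bought by shrinking $\D^2\Phi^*$ in the active directions is never worth more than the divergence $B_\Phi$ it costs — concretely, to bound $\sup_\ell\eta(\ell,\Phi)/\eta(\ell,H)$ by $B_\Phi/\log K$, exploiting that $D_\Phi(\delta_\theta,\mu)$ is the path integral of $\D^2\Phi$ and that $\D^2\Phi^*(\D\Phi(\mu))$ inverts $\D^2\Phi(\mu)$ on the tangent space.

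The main obstacle is exactly this local-to-global passage. The constant $\eta(\ell,\Phi)$ is a worst-case-over-points infimum of a curvature ratio, whereas $B_\Phi$ is a single inf-sup of an integrated divergence, and a non-Shannon entropy is free to concentrate large curvature (small $\D^2\Phi^*$, hence large $\eta$) at points far from where $\sup_\theta D_\Phi(\delta_\theta,\mu)$ is attained; there is no termwise reason the two quantities should be comparable. Vovk's lower bound settles the inequality as $K\to\infty$ — an entropy with $R_{\ell,\Phi}<R_{\ell,H}$ surviving as $K$ grows would beat the $\Omega(\eta_\ell^{-1}\log K)$ barrier — but gives nothing at fixed $K$. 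Securing the bound uniformly for every fixed $\Theta$, rather than only asymptotically, is the crux, and is why the statement is left as a conjecture.
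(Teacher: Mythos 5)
You should first be clear about what the paper itself does here: this statement is Conjecture~\ref{quest:msr-1}, and the paper offers \emph{no proof} of it. The authors explicitly write that they were unable to prove it and instead give numerical evidence (Table~\ref{tab:regrets}, computed via the entropic condition of Definition~\ref{def:mix-F} and a binary search on $M(\eta)$), plus the consistency check in \S\ref{sub:asymptotics} that Vovk's lower bound forbids any entropy from beating $\log K$ asymptotically in $K$. So there is no paper proof to compare your proposal against, and your proposal --- which also stops short of a proof and says so --- leaves the statement exactly as open as the paper does. Your preliminary reductions are correct and match the paper's framework: $B_H=\log K$ since $D_H(\delta_\theta,\mu)=-\log\mu_\theta$; the scale-invariance $R_{\ell,\Phi}=R_{\ell,\alpha\Phi}$ justifies comparing single representatives; and the reformulation $H\geq_*\Phi \iff \eta(\ell,\Phi)/\eta(\ell,H)\le B_\Phi/\log K$ for all $\ell$ is the right scalar target. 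The passage to proper losses via Theorem~\ref{thm:mix-proper} is also how the paper itself operationalizes the question.

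Beyond the obstacle you name (the local-to-global passage from pointwise curvature ratios to the integrated quantity $B_\Phi$), there is a second gap you gloss over: your identification of $\eta(\ell^F,\Phi)$ with the threshold of the semidefinite condition $\eta\,\bigl(\D^2\Phi^*(\D\Phi(\mu))\otimes C(p)\bigr)\preceq\operatorname{diag}(\mu)\otimes\D^2F(p)$ is itself unproven for general $\Phi$. The second-order expansion of \eqref{eq:mix-F} along perturbations $P_\theta=p+\epsilon d_\theta$ yields a \emph{necessary} condition, hence an upper bound on $\eta(\ell^F,\Phi)$; sufficiency --- that the infinitesimal condition holding everywhere implies the full (non-local) mixability inequality --- is exactly the kind of statement that \cite{Erven:2012} had to prove for $\Phi=H$, and nothing in the paper's machinery delivers it for arbitrary entropies. (Note also that your Shannon decoupling argument, while a nice observation, silently uses this equivalence in both directions to ``recover the classical constant.'') So even granting your Hessian calculus, the proposal reduces one open inequality to another open inequality plus an unestablished characterization of $\eta(\ell,\Phi)$; your honest conclusion that the statement remains a conjecture is the correct one, and it is also the paper's.
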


Although we have not been able to prove this conjecture we were able to collect
some positive evidence in the form of Table~\ref{tab:regrets}.
Here, we took the entropic form of $\Phi$-mixability from 
Definition~\ref{def:mix-F} and implemented\footnote{
	In order to preserve anonymity the code will not be made available 
	until after publication.
}
it as an optimization problem in 
the language R and computed $\eta(\ell^F, \Phi)$ for $F$ and $\Phi$ equal to the entropies introduced in Example~\ref{ex:entropies} for two expert games with two outcomes. 
The maximal $\eta$ (and hence the optimal regret bounds)  for each pair was found doing a binary search
for the 
zero-crossing of $M(\eta)$ from Lemma~\ref{lem:scaling} and then applying the 
bounds from Example~\ref{ex:regrets}.
Although we were expecting the dominant entropy for each loss $\ell^F$ to be 
its ``matching'' entropy (\ie, $\Phi = F$), as can be seen from the table the 
optimal regret bound for every loss was obtained in the column for $H$.
However, one interesting feature for these matching cases is that the optimal
$\eta$ (shown in parentheses) is always equal to 1.

\begin{conjecture}
	Suppose $|X| = |\Theta|$ so that $\DT = \DX$. 
	Given a Legendre $\Phi : \DT \to \R$ and its associated proper loss 
	$\ell^\Phi : \DX \to \R^X$, the 
	maximal $\eta$ such that $\ell^\Phi$ is $\eta^{-1}\Phi$-mixable is 
	$\eta = 1$.
\end{conjecture}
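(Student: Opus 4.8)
The plan is to use the monotonicity of $M(\eta)$ from Lemma~\ref{lem:scaling}: since $\Phi_\eta$-mixability is equivalent to $M(\eta)\ge 0$ and $M$ is non-increasing, it suffices to establish two things, namely (A) that $\ell^\Phi$ is $\Phi$-mixable (so that $\eta(\ell^\Phi,\Phi)\ge 1$) and (B) that $\ell^\Phi$ fails to be $\Phi_\eta$-mixable for every $\eta>1$ (so that $\eta(\ell^\Phi,\Phi)\le 1$). Throughout I would exploit the self-referential structure of this problem: the Bayes-risk entropy of the loss coincides with the regularizing entropy, $F=\Phi$. The classical case $\Phi=H$, where $\ell^\Phi$ is the log loss and the mixability constant is famously $\eta=1$, is the prototype to keep in mind.

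For (A) I would unfold the bound via Lemma~\ref{lem:alt-mix}. Writing $b:=\D\Phi(\mu)$ and $s_x:=\Phi^*(b-\ell^\Phi_x(P))$, the requirement is an action $\phat$ with $\ell^\Phi_x(\phat)\le \Phi^*(b)-s_x$ for all $x$. Since Lemma~\ref{lem:proper-loss} gives $\ell^\Phi_x(\phat)=\Phi^*(\D\Phi(\phat))-(\D\Phi(\phat))_x$, an expression invariant under $\D\Phi(\phat)\mapsto\D\Phi(\phat)+\alpha\ones$, the natural candidate is the ``generalized mixture'' $\phat:=\D\Phi^*(s)$, for which $\D\Phi(\phat)=s$ modulo $\ones$ and hence $\ell^\Phi_x(\phat)=\Phi^*(s)-s_x$. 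The entire vector of inequalities then collapses to the single scalar claim $\Phi^*(s)\le\Phi^*(b)$. I would prove this by writing both conjugates variationally, $\Phi^*(s)=\sup_{\sigma\in\DX}\inner{\sigma,s}-\Phi(\sigma)$ and each $s_x=\sup_{\nu\in\DT}\inner{\nu,b-\ell^\Phi_x(P)}-\Phi(\nu)$, pulling the inner suprema out past the non-negative weights $\sigma_x$, and introducing the joint law $\pi_{x\theta}=\sigma_x\nu^x_\theta$ with $\Theta$-marginal $\tau$. Using $\ell^\Phi_x(P_\theta)=\Phi^*(\D\Phi(P_\theta))-(\D\Phi(P_\theta))_x$ together with \eqref{eq:conjugate-sup} and the $|X|=|\Theta|$ identification (which lets one pair an expert's outcome distribution with its dual point), one verifies that the combined objective equals $\inner{\tau,b}-\Phi(\tau)$ plus a remainder whose supremum over admissible couplings is $-\Phi(\tau)$; taking the outer supremum then returns exactly $\Phi^*(b)$. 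Sion's theorem, used as in the proof of Theorem~\ref{thm:mix-proper}, is the tool for exchanging the two suprema legitimately. In the Shannon case this computation reduces to the familiar fact that the aggregating prediction is the ordinary mixture $\sum_\theta\mu_\theta P_\theta$ and $\Phi^*(s)=\Phi^*(b)$ holds with equality.

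For (B) I would pass to the entropic characterization (Definition~\ref{def:mix-F}, Theorem~\ref{thm:mix-proper}) with Bayes-risk entropy $F=\Phi$ and scaled mixing entropy $\Phi_\eta$, and then localize. A short translation-invariance computation shows that when all experts predict a common $p$ the expression in \eqref{eq:mix-F} reduces to $\Phi^*(-\ell^\Phi(p))$, which vanishes for every $\eta$, so the worst case is genuinely second order. Expanding around clustered experts $P_\theta=p+\epsilon u_\theta$ with $u_\theta$ tangent ($\inner{u_\theta,\ones}=0$), the first-order term cancels by properness and the second-order term is governed by a pointwise comparison of curvatures. This is the generalization to arbitrary regularizers of the classical local characterization of mixability~\cite{Erven:2012}, which I would formulate as: a curved proper loss $\ell^F$ is $\Phi_\eta$-mixable if and only if $\D^2 F(p)\preceq\eta^{-1}\,\D^2\Phi(p)$ on the tangent space $T=\{v:\inner{v,\ones}=0\}$ for every $p\in\ri(\DX)$. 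Specializing to $F=\Phi$ this reads $(1-\eta^{-1})\,\D^2\Phi(p)\preceq 0$ on $T$; since $\Phi$ is Legendre it is strictly convex on the interior, so $\D^2\Phi(p)\succ 0$ on $T$ (equivalently, $\ell^\Phi$ is curved), and the comparison holds precisely when $\eta\le 1$ and fails strictly for every $\eta>1$. The strict failure produces a clustered configuration witnessing non-$\Phi_\eta$-mixability, which gives $\eta(\ell^\Phi,\Phi)\le 1$.

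The main obstacle is establishing the local curvature characterization rigorously in both directions. Necessity follows from the second-order expansion of \eqref{eq:mix-F} sketched above, but sufficiency --- that the pointwise semidefinite inequality forces the global supremum in \eqref{eq:mix-F} to be non-positive --- requires integrating the local condition and controlling non-local configurations, where the Legendre hypothesis is essential to confine the relevant $\mu$ and $P$ to $\ri(\DX)$ and to ensure the two Hessians are compared at a common point. A secondary difficulty is the collapse $\Phi^*(s)\le\Phi^*(b)$ in (A): the exchange of the nested suprema and the bookkeeping of the two index sets must be handled with care, and one must check that $\D\Phi^*(s)\in\ri(\DX)$ so that the candidate prediction $\phat$ is admissible. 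I expect (A) to also follow a posteriori from the ``only if'' half of the curvature characterization evaluated at $\eta=1$, so that the genuinely new content of the proof is the local characterization itself.
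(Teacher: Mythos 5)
You should know at the outset that the paper contains \emph{no proof} of this statement: it is posed as an open conjecture, and the only evidence offered is the asterisked diagonal of Table~\ref{tab:regrets}, computed numerically for two-outcome, two-expert games. So your proposal cannot be compared against a paper proof; it must stand alone, and it does not. The decisive flaw is the ``local curvature characterization'' on which both halves of your argument ultimately rest: the claim that $\ell^F$ is $\eta^{-1}\Phi$-mixable if and only if $\D^2F(p)\preceq\eta^{-1}\D^2\Phi(p)$ on the tangent space \emph{at a common point} $p$. In Definition~\ref{def:mix} the expert predictions $P$ and the mixture $\mu$ are quantified independently: the curvature of $F$ is probed where the predictions $P_\theta$ cluster (a point of $\DX$), while the curvature of $\Phi$ is probed at $\mu$ (a point of $\DT$), and the hypothesis $|X|=|\Theta|$ does nothing to couple these two points. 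Carry out your own second-order expansion in the $2\times2$ case, with experts at $p\pm\epsilon$, the mixture held fixed at $\mu$, $f(p):=F((p,1-p))$, $\phi(m):=\Phi((m,1-m))$, and the binary loss written as $\ell_1(p),\ell_0(p)$ with $p$ the predicted probability of outcome $1$. Using Lemma~\ref{lem:alt-mix}, translation invariance, and properness ($p\,\ell_1'+(1-p)\ell_0'=0$, $\ell_1'=-(1-p)f''$, $\ell_0'=pf''$) to eliminate $\acth$, one obtains the \emph{necessary} condition
\begin{equation*}
\eta\,\cdot\,p(1-p)f''(p)\,\cdot\,\frac{4}{\phi''(\mu)}\;\le\;1
\qquad\text{for all pairs }(p,\mu),
\end{equation*}
not merely for $p=\mu$. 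When $\Phi=H$ this decoupled condition collapses to the pointwise condition of \cite{Erven:2012} precisely because $p(1-p)H''(p)\equiv1$; that identity is a special property of Shannon entropy and is the hidden assumption in your proposed generalization.

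The consequences split your argument in two. Part (B) survives: taking $p=\tfrac12$ and $\mu=\tfrac12\ones$ in the display gives $\eta\le1$, so the upper bound $\eta(\ell^\Phi,\Phi)\le1$ does hold under your smoothness and curvedness assumptions. But part (A) is false, and with it the conjecture itself. Take $\Phi=F=S_{-1/2}$, a Tsallis entropy which the paper certifies as Legendre. Fix $\mu=\tfrac12\ones$, so $4/\phi''(\mu)=\sqrt2>0$, and let the experts cluster at $p\to0$: then $p(1-p)f''(p)=\tfrac12\bigl((1-p)p^{-1/2}+p(1-p)^{-1/2}\bigr)\to\infty$, so the displayed condition fails for \emph{every} $\eta>0$. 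Hence $\ell^{S_{-1/2}}$ is not $\eta^{-1}S_{-1/2}$-mixable for any $\eta>0$; the matching mixability constant is $0$, not $1$. (The Tsallis diagonal of Table~\ref{tab:regrets} is presumably an artifact of a numerical grid that stays away from the simplex boundary, where this relative-curvature ratio degenerates.) This also closes off your first route to (A): your reduction is itself correct --- the candidate $\phat=\D\Phi^*(s)$ does satisfy $\ell^\Phi_x(\phat)=\Phi^*(s)-s_x$, so (A) is equivalent to $\Phi^*(s)\le\Phi^*(b)$ --- but the coupling manipulation meant to prove that inequality was never actually carried out in your sketch, and no bookkeeping can rescue it, because for Tsallis entropies the inequality is false. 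What is sound in your proposal is the upper bound; the conjectured equality $\eta=1$ holds essentially only when $\Phi$ is (up to affine terms and scaling) the Shannon entropy, so the classical prototype you used as a guide is not representative of the general Legendre case.
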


We conjecture that this pattern will hold for matching entropies and losses
for larger numbers of experts and outcomes and hope to test or prove this in future work.

\begin{table} \centering
\caption{Mixability and optimal regrets for pairs of losses and entropies in
	2 outcome/2 experts games. 
	Entries show the regret bound 
	$\eta^{-1}D_{\Phi}(\delta_\theta, \frac{1}{2}\ones)$
	for the maximum $\eta$ (in parentheses).
	\label{tab:regrets}}
\begin{tabular}{@{} lccccccccc @{}}  
	& \multicolumn{9}{c}{\textbf{Entropy}} 
\\ \cmidrule{2-10} 
	\textbf{Loss}
	& $H$
	& 
	& $S_{-.1}$ 
	& $S_{-.5}$ 
	& $S_{-.9}$ 
	& 
	& $R_{-.1}$ 
	& $R_{-.5}$ 
	& $R_{-.9}$
\\ \midrule  
    $\log$   		
    & 0.69 ($1^*$) 
    &
    & 0.74 (.97)  
	& 1.17 (.71) 
	& 5.15 (.19)
    &
    & 0.77 (0.9) 
	& 1.38 (0.5)
	& 6.92 (0.1) 
\\
    $\ell^Q$   		
    & 0.34 (2) 
    &
    & 0.37 (1.9)  
	& 0.58 (1.4)
	& 2.57 (0.4)
    &
    & 0.38 (1.8)
	& 0.69 (1)
	& 3.45 (0.2) 
\\
    $\ell^{S_{-.5}}$   		
    & 0.49 (1.4) 
    &
    & 0.53 (1.4) 
	& 0.82 ($1^*$)
	& 3.64 (.26)
    &
    & 0.54 (1.3)
	& 0.98 (.71)
	& 4.90 (.14)
\\
    $\ell^{R_{.5}}$   		
    & 0.34 (2) 
    &
    & 0.37 (1.9)  
	& 0.58 (1.4) 
	& 2.57 (.37) 
    &
    & 0.38 (1.8) 
	& 0.69 ($1^*$) 
	& 3.46 (0.2) 
 \\\bottomrule
\end{tabular}
\end{table}





\subsubsection*{Acknowledgments}

We would like to thank Matus Telgarsky for help with restricted duals, Brendan 
van Rooyen for noting that there are no quadratic mixable losses, and Harish 
Guruprasad for identifying a flaw in an earlier ``proof'' of the quasi-convexity 
of proper losses.
Mark Reid is supported by an ARC Discovery Early Career Research Award (DE130101605) and part of this work was developed while he was visiting Microsoft Research.
NICTA is funded by the Australian Government and as an ARC ICT Centre of Excellence.

\newpage


\bibliographystyle{unsrt}
\bibliography{phi-mix}

\newpage

\appendix

\section{Appendix} 

\subsection{Proof of Lemmas} \label{app:lemmas} 

\begin{proof}[Proof of Lemma~\ref{lem:prob-duals}]
	To show 1) we observe that 
	$(\eta^{-1}\Phi)^*(v) 
	= \sup_{p} \inner{v,p} - \eta^{-1}\Phi(p)
	= \eta^{-1} \sup_{p} \inner{\eta v,p} - \Phi(p)
	= \eta^{-1}\Phi^*(\eta v)$.
	For 2), we note that the definition of the dual implies
	\(
		\Phi^*(v + \alpha\ones) 
		= \sup_{\mu\in\DT} \inner{\mu,v + \alpha\ones} - \Phi(\mu)
		= \sup_{\mu\in\DT} \inner{\mu,v} - \Phi(\mu) + \alpha
		= \Phi^*(v) + \alpha
	\)
	since $\inner{\mu,\ones}=1$.
	Taking derivatives of both sides gives the final part of the lemma.
\end{proof}

\begin{proof}[Proof of Lemma~\ref{lem:alt-mix}]
	By definition 
	$\Phi^*(\D\Phi(\mu) - v) 
	= \sup_{\mu'\in\DT} \inner{\mu',\D\Phi(\mu)-v} - \Phi(\mu')$ 
	and using \eqref{eq:conjugate-sup} gives
	$\Phi^*(\D\Phi(\mu)) = \inner{\mu,\D\Phi(\mu)} - \Phi(\mu)$.
	Subtracting the former from the latter gives
	\(
		\inner{\mu,\D\Phi(\mu)} - \Phi(\mu)
		-\left[
			\sup_{\mu'\in\DT} \inner{\mu',\D\Phi(\mu)-v} - \Phi(\mu')
		\right]
	\)
	which, when rearranged gives
	\(
		\inf_{\mu'\in\DT}
			\Phi(\mu') - \Phi(\mu)
			- \inner{\D\Phi(\mu), \mu'-\mu}
			+ \inner{\mu',v}
	\)
	establishing the result.

	When $\Phi = H$ -- \ie, $\Phi$ is the (negative) Shannon entropy -- we have 
	that $\D\Phi(\mu) = \log \mu + \ones$, that  
	$\Phi^*(v) = \log \sum_\theta \exp(v_\theta)$, and so
	$\D\Phi^*(v) = \exp(v) / \sum_\theta \exp(v_\theta)$, where $\log$ and
	$\exp$ are interpreted as acting point-wise on the vector $\mu$.
	By Lemma~\ref{lem:prob-duals},  
	$\Phi^*(\D\Phi(\mu)) = \Phi^*(\log\mu + \ones) = \Phi^*(\log(\mu)) + 1 = 1$
	since $\Phi^*(\log(\mu_\theta)) = \log\sum_\theta \mu_\theta = 0$.
	Similarly, 
	$\Phi^*(\D\Phi(\mu) - \ell_x(A)) 
		= \Phi^*(\log(\mu) - \ell_x(A)) + 1
		= \log\sum_\theta \mu_\theta \exp(-\ell_x(A)) + 1$.
	Substituting this into Lemma~\ref{lem:alt-mix} and applying the second part 
	of Lemma~\ref{lem:prob-duals} shows that
	$\Mix_{\ell,x}^{\eta^{-1} H}(A, \mu) 
		= -\eta^{-1} \log\sum_\theta \exp(-\eta\ell_x(A_\theta))$,
	recovering the right-hand side of the inequality in 
	Definition~\ref{def:vovk-mix}.
\end{proof}

\begin{proof}[Proof of Lemma~\ref{lem:proper-loss}]
  By eq.~\eqref{eq:conjugate-sup} we have $F^*(\D F(p)) = \inner{p,\D F(p)} - F(p)$, giving us
  \begin{align*}
    \inner{p,\ell^F(p')} - \inner{p,\ell^F(p)}
    &= \Bigl(\inner{p',\D F(p')} - F(p') - \inner{p,\D F(p')}\Bigr)\\ &\quad
    - \Bigl(\inner{p,\D F(p)} - F(p) - \inner{p,\D F(p)}\Bigr)\\
    &= D_F(p,p'),
  \end{align*}
  from which propriety follows.
\end{proof}

\begin{proof}[Proof of Lemma~\ref{lem:updates}]
By considering the Lagrangian
\(
	\Lag(\mu,a)
	= 
	\inner{\mu,\ell_{x^t}(A)} + D_\Phi(\mu, \mu^{t-1}) 
	+ \alpha (\inner{\mu,\ones}-1)
\)
and setting its derivative to zero we see that the minimizing $\mu^t$ must 
satisfy
\(
	\D\Phi(\mu^t) 
	=
	\D\Phi(\mu^{t-1}) - \ell_{x^t}(A^t) - \alpha^t\ones
\)
where $\alpha^t \in R$ is the dual variable at step $t$.
For convex $\Phi$, the functions $\D\Phi^*$ and $\D\Phi$ are inverses
\cite{Hiriart-Urruty:2001} so 
$\mu^t = \D\Phi^*(\D\Phi(\mu^{t-1}) - \ell_{x^t}(A^t) - a^t\ones)
       = \D\Phi^*(\D\Phi(\mu^{t-1}) - \ell_{x^t}(A^t))$
by the translation invariance of $\Phi^*$ (Lemma~\ref{lem:prob-duals}).
This means the constants $\alpha^t$ are arbitrary and can be ignored.
Thus, the mixture updates satisfy the relation in the lemma and summing
over $t=1,\ldots,T$ gives \eqref{eq:updates}.
\end{proof}

\begin{amrnote}{}
	Somebody (Bob?) check that my translation from the JMLR version
	is sane.
\end{amrnote}

\begin{proof}[Proof of Lemma~\ref{lem:proper-qc}]
	Let $n = |X|$ and fix an arbitrary $p\in\DX$. 
	The function $f_p(q) = \inner{p, \ell(q)}$ 
	is quasi-convex if its $\alpha$ sublevel sets 
	\(
	F_p^\alpha := \{q\in\DX \colon \inner{p,\ell(q)} \le\alpha\}
	\)
	are convex for all $\alpha\in\R$. 
	Let $g(p) := \inf_{q} f_p(q)$ and fix an arbitrary 
	$\alpha > g(p)$ so that 
	$F_p^\alpha\ne\emptyset$. 
	Let
	\(
		Q_p^\alpha := \{v\in\R^n \colon \inner{p,v} \le \alpha \} 
	\)
	so $F_p^\alpha = \{q\in\DX \colon \ell(q) \in Q_p^\alpha\}$.
	Denote by 
	\(
		h_q^\beta:=\{v\colon \inner{v, q} = \beta\}
	\)
	the hyperplane in direction $q\in\DX$ with offset
	$\beta\in\R$ and by
	\(
		H_q^\beta:=\{v\colon \inner{v, q} \ge \beta\}
	\)
	the corresponding half-space. Since $\ell$ is proper,
	its \emph{superprediction set} 
	$\mathcal{S}_\ell 
	= \{ 
		\lambda \in \R^n : \exists q\in\DX \forall x\in X \lambda_x \ge \ell_x(q)
	\}$ 
	(see \cite[Prop. 17]{Vernet:2011}) is supported at $x=\ell(q)$ by the
	hyperplane $h_{q}^{g(q)}$ and furthermore since $\mathcal{S}_\ell$ is
	convex,  $\mathcal{S}_\ell=\bigcap_{q\in\DX}
	H_q^{g(q)}$. 

	Let
	\[
		V_p^\alpha := \bigcap_{v\in\ell(\DX)\cap Q_p^\alpha}
		H_{\ell^{-1}(v)}^{g(\ell^{-1}(v))}
		=\bigcap_{q\in F_p^\alpha} 
		H_q^{g(q)}
	\]
	(see figure \ref{figure:quasi-convexity-proof}).
	Since $V_p^\alpha$ is the intersection of halfspaces it is convex.
	Note that a given half-space $H_q^{g(q)}$ is supported by
	exactly one hyperplane, namely $h_q^{g(q)}$. Thus the set of
	hyperplanes that support $V_p^\alpha $ is
	$ \{h_q^{g(q)}\colon q\in F_p^\alpha\}$
	If $u\in F_p^\alpha$ then there is a hyperplane in direction
	$u$ that supports $V_p^\alpha$ and its offset is given by
	\[
	\sigma_{V_p^\alpha}(u):=\inf_{v\in V_p^\alpha} \inner{u,v} =
	g(p)>-\infty
	\]
	whereas if $u\not\in F_p^\alpha$ then for all $\beta\in\R$, $h_u^\beta$ 
	does not
	support $V_p^\alpha$ and hence $\sigma_{V_p^\alpha}(u)=-\infty$.
	Thus we have shown 
	\[
	\left(u\not\in W_p^\alpha\right) \Leftrightarrow
	\left(\sigma_{V_p^\alpha}(u)=-\infty\right).
	\]
	Observe that
	$\sigma_{V_p^\alpha}(u)=-s_{V_p^\alpha}(-u)$ where
	$s_C(u)=\sup_{v\in C} \inner{u, v}$ is the support function of a set $C$. It
	is known \cite[Theorem 5.1]{Valentine:1964} that the ``domain of
	definition'' of a support function
	$\{u\in\R^n\colon s_C(u)<+\infty\}$ for a convex
	set $C$ is always convex. Thus $G_p^\alpha:=\{u\in\DX\colon
	\sigma_{V_p^\alpha}(u) > -\infty\}=\{u\in\R^n\colon
	\sigma_{V_p^\alpha}(u)>-\infty\}\cap\DX$ is always convex
	because it is the intersection of convex sets. Finally by
	observing that 
	\[
	G_p^\alpha=\{p\in\DX\colon
	\ell(p)\in\ell(\DX)\cap Q_p^\alpha\}=F_p^\alpha
	\]
	we have shown that $F_p^\alpha$ is convex. Since $p\in\DX$ and
	$\alpha\in\R$ were arbitrary we have thus shown that $f_p$ is
	quasi-convex for all $p\in\DX$.

\end{proof}

\begin{figure}
	\centering
	\includegraphics[width=0.8\linewidth]{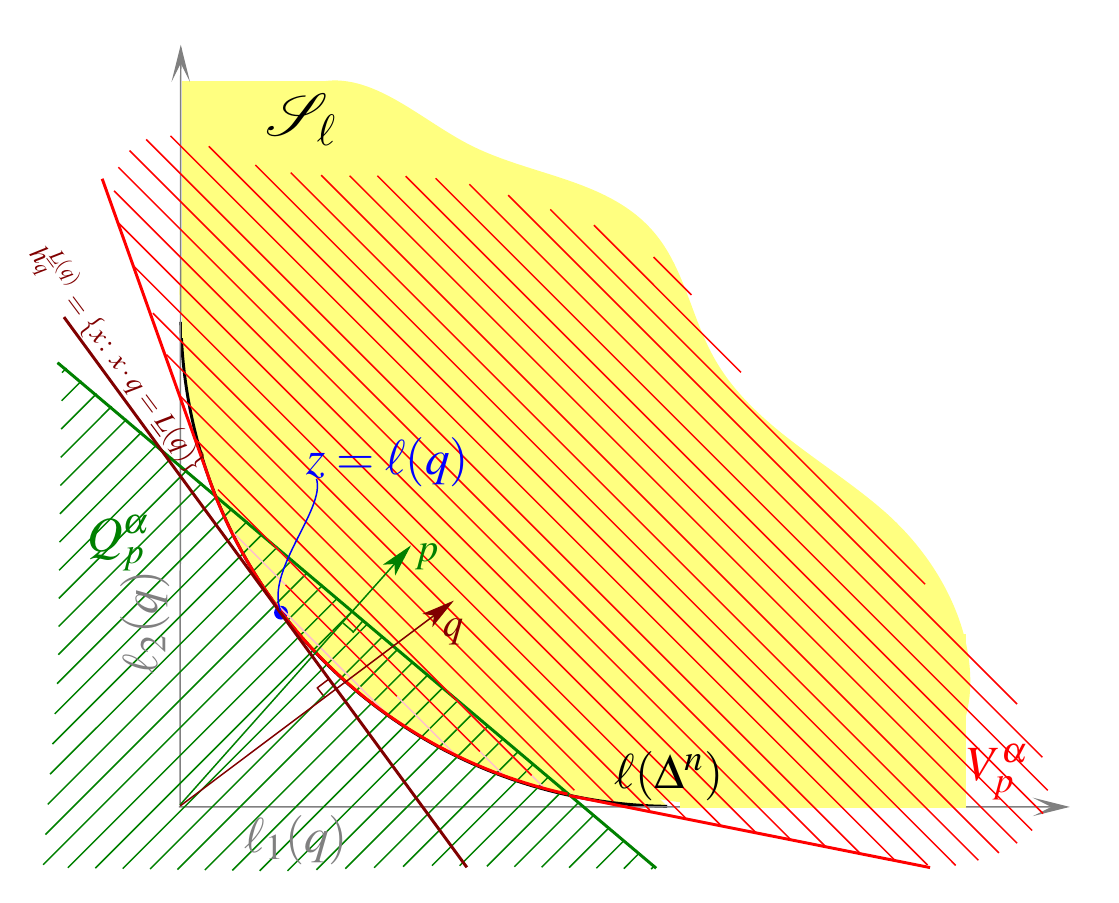}
	\caption{Visualization of construction in proof of 
		Lemma~\ref{lem:proper-qc}.\label{figure:quasi-convexity-proof}}
\end{figure}


\subsection{Proof of Theorem~\ref{thm:main}}\label{app:proof-of-main} 

\begin{proof}[Proof of Theorem~\ref{thm:main}]
Applying Lemma~\ref{lem:alt-mix} to the assumption that $\lossx$ is 
$\Phi$-mixable means that for $\mu$ equal to the updates $\mu^t$
from Definition~\ref{def:updates} and $A^t$ equal to the expert predictions
at round $t$, there must exist an $\acth^t \in \DX$ such that
\[
	\lossx_{x^t}(\acth^t)
	\le
	\Phi^*(\D\Phi(\mu^{t-1})) - \Phi^*(\D\Phi(\mu^{t-1}) - \ell_{x^t}(A^t))
\]
for all $x^t \in X$.
Summing these bounds over $t=1,\ldots,T$ gives
\begin{align}
	\sum_{t=1}^T \lossx_{x^t}(p^t)
	\le&
	\sum_{t=1}^T \Phi^*(\D\Phi(\mu^{t-1})) 
		- \Phi^*(\D\Phi(\mu^{t-1}) - \ell_{x^t}(A^t))
	\notag
	\\
	= &
	\Phi^*(\D\Phi(\mu^0)) - \Phi^*(\D\Phi(\mu^T))
	\label{eq:telescopes}\\
	= &
	\inf_{\mu'\in\DT} 
		\inner{\mu',\sum_{t=1}^T \ell_{x^T}(A^t)}
		+ D_\Phi(\mu',\mu^0)
	\label{eq:mu-T} \\
	\le&
	\inner{\mu',\sum_{t=1}^T \ell_{x^t}(A^t)} + D_\Phi(\mu',\mu^0)
	\qquad\text{ for all } \mu' \in \DT
	\label{eq:converse-needs-this}
\end{align}
Line \eqref{eq:telescopes} above is because 
$\D\Phi(\mu^t) = \D\Phi(\mu^{t-1}) - \ell_{x^t}(A^t)$ by Lemma~\ref{lem:updates} 
and the series telescopes.
Line ~\eqref{eq:mu-T} is obtained by applying \eqref{eq:gen-aa-update} from
Lemma~\ref{lem:updates} and matching equations \eqref{eq:alt-mix} and 
\eqref{eq:mixability}.
Setting $\mu' = \delta_\theta$ and noting 
$\inner{\delta_\theta,\ell(A^t)} = \lossx_{x^t}(A^t_\theta)$
gives the required result.
\end{proof}

 \subsection{Proof of Theorem~\ref{thm:mix-proper}}\label{app:mix-proper} 

We first establish a general reformulation of $\Phi$-mixability that holds for
arbitrary $\ell$ by converting the quantifiers in the definition of $\Phi$-mixability from Lemma~\ref{lem:alt-mix} for $\ell$ into an expression involving 
infima and suprema. 
We then further refine this by assuming $\ell = \ell^F$ is proper (and thus quasi-convex) and has Bayes risk $F$.
\begin{align}
	& 
	\inf_{A,\mu} \sup_{\acth} \inf_{x} \;
	\Phi^*(\D\Phi(\mu)) - \Phi^*(\D\Phi(\mu) - \ell_x^F(A)) - \ell^F_x(\acth)
	\ge 0
	\notag \\ 
	\iff
	& 
	\inf_{A,\mu} \sup_{\acth} \inf_{p} \;
	\inner{
		p, 
		\left\{
			\Phi^*(\D\Phi(\mu))-\Phi^*(\D\Phi(\mu)-\ell_x^F(P))
		\right\}_x
	} - \inner{p, \ell^F_x(\phat)}
	\ge 0
	\label{eq:first-objective}
\end{align}
where the term in braces is a vector in $\R^X$.
The infimum over $x$ is switched to an infimum over distributions
over $p \in\DX$ because the optimization over $p$ will be achieved on the 
vertices of the simplex as it is just an average over random variables over $X$.

From here on we assume that $\ell = \ell^F$ is proper and adjust our notation
to emphasis that actions $\acth = \phat$ and $A = P$ are distributions.
Note that the new expression is linear -- and therefore convex in $p$ -- and,
by Lemma~\ref{lem:proper-qc}, we know $\ell^F$ is quasi-convex and so the function being optimized in \eqref{eq:first-objective} is quasi-concave in 
$\phat$.
We can therefore apply Sion's theorem to swap $\inf_p$ and $\sup_{\phat}$ 
which means $\ell^F$ is $\Phi$-mixable if and only if
\begin{align*}
	&
	\inf_{P,\mu} \inf_{p} \sup_{\phat} \;
	\inner{
		p, 
		\left\{
			\Phi^*(\D\Phi(\mu))-\Phi^*(\D\Phi(\mu)-\ell_x^F(P))
		\right\}_x
	} - \inner{p, \ell^F_x(\phat)}
	\ge 0 \notag
	\\
	\iff
	&
	\inf_{P,\mu} \inf_p \;
		\Phi^*(\D\Phi(\mu)) 
		- \inner{p, \left\{ \Phi^*(\D\Phi(\mu) - \ell^F_x(P)) \right\}_x)} 
		+ F(p) \ge 0 
	\\
	\iff
	&
	\inf_{P,\mu} \;
		\Phi^*(\D\Phi(\mu)) 
		- F^*(\left\{ \Phi^*(\D\Phi(\mu) - \ell^F_x(P)) \right\}_x)
	\ge 0
\end{align*}
The second line above is obtained by recalling that, by the definition of 
$\ell^F$, its Bayes risk is $F$.
We now note that the inner infimum over $p$ passes through $\Phi^*(\D\Phi(\mu))$ 
so that the final two terms are just the convex dual for $F$ evaluated at 
$\left\{\Phi^*(\D\Phi(\mu) - \ell^F_x(P))\right\}_x$.
Finally, by translation invariance of $F^*$ we can pull the $\Phi^*(\pi^*)$
term inside $F^*$ to simplify further so that the loss $\ell^F$ with Bayes risk
$F$ is $\Phi$-mixable if and only if
\begin{equation*}
	\inf_{P,\mu} \;
		- F^*\left(
			\left\{ \Phi^*(\D\Phi(\mu) - \ell^F_x(P)) \right\}_x 
			- \Phi^*(\D\Phi(\mu))\ones
		\right)
	\ge 0.
\end{equation*}
Applying Lemma~\ref{lem:proper-loss} to write $\ell^F$ in terms of $F$ and
passing the sign through the infimum and converting it to a supremum gives the
required result.


\subsection{Proof of Theorem~\ref{thm:legendre}} \label{app:legendre} 

\begin{arfnote}{}
  If we keep $M(\eta)$ above, just back-reference (but keep the
  equation for readability)
\end{arfnote}
We will make use the following formulation of mixability,
\begin{equation}
  M(\eta) := \inf_{A\in\acts,\,\pi\in\DT} \;\sup_{\acth \in \acts} \;\inf_{\mu\in\DT,\,x\in X} \;
  \inner{\mu, \ell_x(A)} + \frac 1 \eta D_\Phi(\mu,\pi) - \ell_x(\acth), 
  \label{eq:three-term-mixability}
\end{equation}
so that $\ell$ is $\Phi_\eta$-mixable if and only if $M(\eta) \geq 0$.

We call a loss $\ell$ \emph{nontrivial} if there exist $x^*,x'$ and $a^*,a'$ such that
\begin{equation}
  \label{eq:phi-mix-nontrivial-loss}
  a' \in \argmin\{\ell_{x^*}(a) : \ell_{x'}(a) = \inf_{a\in\acts} \ell_{x'}(a)\}
  \text{ and } 
  \inf_{a\in\acts}\ell_{x^*}(a) = \ell_{x^*}(a^*) < \ell_{x^*}(a')~.
\end{equation}

Intuitively, this means that there exist distinct actions which are optimal for different outcomes $x^*,x'$.  Note that in particular, among all optimum actions for $x'$, $a'$ has the lowest loss on $x^*$.

\begin{arfnote}{}
  The following lemma requires the Bayes risk of $\ell$ to be strictly convex, but after an email thread with Bob and Nishant, I think I see why (a) it shouldn't be necessary, and even (b) how to potentially relax the assumption to something like ``not linear''.  See inline notes below for some ideas.
\end{arfnote}

\begin{lemma}
  \label{lem:phi-mix-bayes-risk}
  Suppose $\ell$ has a strictly concave Bayes risk $L$.  Then given any distinct $\mu^*,\mu'\in\DT$, there is some $A\in\acts$ and $x^*,x'\in X$ such that for all $\acth\in\acts$ we have at least one of the following:
\begin{equation}
  \label{eq:phi-mix-fork}
  \inner{\mu^*, \ell_{x^*}(A)} < \ell_{x^*}(\acth)~,
  \quad
  \inner{\mu', \ell_{x'}(A)} < \ell_{x'}(\acth)~.
\end{equation}
\end{lemma}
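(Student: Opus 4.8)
The plan is to reduce everything to a two–dimensional picture in the loss plane and then exploit the strict curvature of the Bayes risk. Since $\mu^*\neq\mu'$ there is a subset $G\subseteq\Theta$ with $\mu^*(G)\neq\mu'(G)$. I would collapse the experts into two groups, letting every $\theta\in G$ play some action $a_1$ and every $\theta\notin G$ play some action $a_2$, so that $A$ is determined by the pair $(a_1,a_2)$ and, writing $s^*:=\mu^*(G)$, $s':=\mu'(G)$, we have $\inner{\mu^*,\ell_x(A)}=s^*\ell_x(a_1)+(1-s^*)\ell_x(a_2)$ and $\inner{\mu',\ell_x(A)}=s'\ell_x(a_1)+(1-s')\ell_x(a_2)$ with $s^*\neq s'$. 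After also fixing two outcomes $x^*,x'$, the problem lives in the loss plane $\R^2$ with coordinates $(\ell_{x^*}(\cdot),\ell_{x'}(\cdot))$: writing $P_i=(\ell_{x^*}(a_i),\ell_{x'}(a_i))$ and $S_2$ for the (lower–left) superprediction set of the projected game, the conclusion \eqref{eq:phi-mix-fork} is exactly the statement that the twisted point $M=(M_u,M_w)$ with
\[
  M_u=s^*P_{1,1}+(1-s^*)P_{2,1},\qquad M_w=s'P_{1,2}+(1-s')P_{2,2}
\]
lies strictly below the frontier of $S_2$, i.e. $M\notin S_2$.

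The key step is a shear estimate. If both coordinates used the same weight, $M$ would be the chord value $Q=s^*P_1+(1-s^*)P_2\in[P_1,P_2]\subseteq S_2$; the mismatch $s^*\neq s'$ shears the second coordinate. Writing the frontier as a convex function $w=b(u)$ and $u_i=P_{i,1}$, a short computation gives
\[
  M_w-b(M_u)=(s'-s^*)\bigl(b(u_1)-b(u_2)\bigr)+\mathrm{Gap},\qquad \mathrm{Gap}:=s^*b(u_1)+(1-s^*)b(u_2)-b(M_u)\ge 0,
\]
where $\mathrm{Gap}\ge0$ is the convexity slack of the chord over the curve. I would then take $a_1,a_2$ to be (near-)Bayes-optimal actions for two nearby mixing directions, so that $P_1,P_2$ are two close points of the frontier with $u_1<u_2$ on a strictly decreasing arc. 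There the shear term is first order in $\delta:=u_2-u_1$ with a nonzero coefficient, $(s^*-s')\bigl(b(u_1)-b(u_2)\bigr)\sim (s^*-s')\,|b'(u_1)|\,\delta$, whereas the gap is $o(\delta)$ (indeed $O(\delta^2)$ when $b$ is twice differentiable). Hence, after arranging the sign $s^*>s'$ by replacing $G$ with its complement or swapping $a_1\leftrightarrow a_2$, for all small enough $\delta$ we obtain $M_w<b(M_u)$, i.e. $M\notin S_2$, which is the desired fork.

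The real content — and the step I expect to be the main obstacle — is converting the hypothesis into this usable geometry, namely producing a smooth, strictly decreasing frontier arc. For any two outcomes the projected Bayes risk is the restriction of $L$ to the edge $\{q\in\DX:\operatorname{supp} q\subseteq\{x^*,x'\}\}$, which is again strictly concave. I would show that strict concavity of this one–dimensional Bayes risk forces $b$ to have no corners (a corner in $b$ makes its conjugate affine on an interval, hence $L$ affine there, contradicting strict concavity) and to be genuinely curved rather than a single point, so a differentiable, strictly decreasing arc with $b'\neq0$ exists. The delicate points to pin down are (i) this smoothness claim together with the order-of-magnitude estimate $\mathrm{Gap}=o(\delta)$, and (ii) the case where the Bayes-optimal action is not attained, which I would handle by using $\varepsilon$-optimal actions and observing that the strict margin $M_w<b(M_u)$ obtained for a fixed small $\delta$ survives an $O(\varepsilon)$ perturbation of $P_1,P_2$. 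For proper losses (the setting of Theorem~\ref{thm:mix-proper}) the minimizer is attained at $a=p$, so this last wrinkle disappears.
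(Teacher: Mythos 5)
Your proposal is sound in outline and reaches the lemma by a genuinely different mechanism than the paper, so a comparison is worthwhile. Both arguments share the same skeleton: restrict to two outcomes $x^*,x'$, take Bayes-optimal actions for two distributions supported on $\{x^*,x'\}$, assign one action to a group of experts on which $\mu^*$ places more mass than $\mu'$ and the other action to the rest, and contradict strict concavity of the Bayes risk. The difference is how the contradiction is extracted. The paper's proof is exact and global: writing $\alpha=\mu^*_{\theta^*}>\beta=\mu'_{\theta^*}$, it engineers the two auxiliary distributions as $p^*_{x^*}=\alpha/(\alpha+\beta)$ and $p'_{x^*}=(1-\alpha)/(2-\alpha-\beta)$, so that averaging the two inequalities produced by a fork-violating $\acth$ regroups, action by action, into $\tfrac12\left(\ell_{x^*}(\acth)+\ell_{x'}(\acth)\right)\le\tfrac{\alpha+\beta}{2}L(p^*)+\left(1-\tfrac{\alpha+\beta}{2}\right)L(p')$, while the left side dominates $L(\bar p)$ for $\bar p$ uniform on $\{x^*,x'\}$, which is exactly that convex combination of $p^*,p'$ --- a single Jensen step, valid at any scale, with no regularity of the loss surface and no limiting argument. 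Your proof is instead local and asymptotic: it needs the frontier $b$ to be differentiable with nonzero slope somewhere (which you correctly obtain from strict concavity via the corner/affine-piece conjugate duality), the estimate $\mathrm{Gap}=o(\delta)$, and an $\varepsilon$-perturbation to handle unattained infima. Those steps are all fillable --- the shear really is first order while the convexity slack is not, once differentiability is in hand --- but they cost several technical lemmas the paper never needs. What your route buys in exchange: it handles non-attainment of the Bayes act explicitly (the paper's $\argmin$ silently assumes attainment); it makes the superprediction-set geometry explicit, connecting to the machinery of Lemma~\ref{lem:proper-qc}; and because it only exploits curvature of the frontier near a single point, it points toward the relaxation of the strict-concavity hypothesis (curvature ``somewhere'', ruling out only losses whose restricted Bayes risk is affine) that the paper's own margin note conjectures should be possible. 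One caution: your assertion $[P_1,P_2]\subseteq S_2$ presumes a convex superprediction set, which a general loss need not have; the argument should be (and implicitly is) run on the lower frontier of the \emph{convex hull} of $S_2$, which suffices since the conclusion only requires $M\notin S_2$ and $S_2$ is contained in its hull.
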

\begin{proof}
  Let $\theta^*$ be an expert such that $\alpha := \mu^*_{\theta^*} > \mu'_{\theta^*} =: \beta$, which exists as $\mu^*\neq\mu'$.  Pick arbitrary $x^*,x'\in X$ and let $p^*,p'\in\DX$ with support only on $\{x^*,x'\}$ and $p^*_{x^*} = \alpha/(\alpha+\beta)$, 
$p'_{x^*} = (1-\alpha)/(2-\alpha-\beta)$. 
Now let $a^* = \argmin_{a\in\acts} \E{x\sim p^*}{\ell_x(a)}$,  $a' = \argmin_{a\in\acts} \E{x\sim p'}{\ell_x(a)}$, and set $A$ such that $A_{\theta^*} = a^*$ and $A_\theta = a'$ for all other $\theta\in\Theta$.

\begin{arfnote}{}
  Here I take a uniform mixture of the two inequalities.  I think if you take a nonuniform mixture, and play around with the choice of outcomes, you can essentially get any line segment $[p',p^*]$ in the simplex $\DX$, and maybe we can even set $\bar p$ to be any point in the interior.  If so, then the argument below would only require $L$ to be strictly concave \textbf{somewhere}; otherwise, it must be linear.
\end{arfnote}
Now suppose there is some $\acth\in\acts$ violating eq.~\eqref{eq:phi-mix-fork}.  Then in particular,
\begin{align*}
  \tfrac{1}{2} \left(\ell_{x^*}(\acth) + \ell_{x'}(\acth)\right)
  & \le \tfrac 1 2 \left(\inner{\mu^*, \ell_{x^*}(A)} + \inner{\mu', \ell_{x'}(A)}\right) \\
  & = \tfrac 1 2 \left(\alpha\ell_{x^*}(a^*) + (1-\alpha)\ell_{x^*}(a') + \beta\ell_{x'}(a^*) + (1-\beta)\ell_{x'}(a')\right) \\
  & = \tfrac{\alpha+\beta}{2}\left(\tfrac{\alpha}{\alpha+\beta}\ell_{x^*}(a^*) + \tfrac{\beta}{\alpha+\beta}\ell_{x'}(a^*)\right) + \tfrac{2-\alpha-\beta}{2}\left(\tfrac{1-\alpha}{2-\alpha-\beta}\ell_{x^*}(a') + \tfrac{1-\beta}{2-\alpha-\beta}\ell_{x'}(a')\right) \\
  & = \tfrac{\alpha+\beta}{2} L(p^*) + \left(1 - \tfrac{\alpha+\beta}{2}\right) L(p')~.
\end{align*}
Letting $\bar p \in \DX$ with $\bar p_{x^*} = \bar p_{x'} = 1/2$, observe that $\bar p = \tfrac{\alpha+\beta}{2} p^* + (1 - \tfrac{\alpha+\beta}{2}) p'$.  But by the above calculation, we have $L(\bar p) \le \tfrac{\alpha+\beta}{2} L(p^*) + (1 - \tfrac{\alpha+\beta}{2}) L(p')$, thus violating strict concavity of $L$.
\end{proof}

\subsubsection*{Non-Legendre$\implies$no nontrivial mixable $\ell$ with strictly convex Bayes risk:}
To show that no non-constant $\Phi$-mixable losses exist, we must 
exhibit a $\pi \in \DT$ and an $A \in \acts$ such that for all $\acth \in \acts$
we can find a $\mu\in\DT$ and $x \in X$ satisfying
\(
	\inner{\mu, \ell_x(A)} + \frac 1 \eta D_\Phi(\mu,\pi) - \ell_x(\acth)  < 0.
\)
Since $\Phi$ is non-Legendre it must either (1) fail strict convexity, or (2) have a point on the boundary with bounded derivative; we will consider each case separately.

\paragraph{(1)}
Assume that $\Phi$ is not strictly convex; then we have some $\mu^*\neq\mu'$ such that $D_\Phi(\mu^*,\mu') = 0$.  By Lemma~\ref{lem:phi-mix-bayes-risk} with these two distributions, we have some $A$ and $x^*,x'$ such that for all $\acth$, either (i) $\inner{\mu^*, \ell_{x^*}(A)} < \ell_{x^*}(\acth)$ or (ii) $\inner{\mu', \ell_{x'}(A)} < \ell_{x'}(\acth)$.  We set $\pi=\mu'$; in case (i) we take $\mu=\mu^*$ and $x=x^*$, and in (ii) we take $\mu=\mu'$ and $x=x'$, but as $\frac 1 \eta D_\Phi(\mu,\pi) = 0$ in both cases, we have $M(\eta)<0$ for all $\eta$.

\paragraph{(2)}
Now assume instead that we have some $\mu'$ on the boundary of $\DT$ with bounded $\|\nabla\Phi(\mu')\| = C < \infty$.
Because $\mu'$ is on the boundary of $\DT$ there is at least one expert 
$\theta^* \in \Theta$ for which $\mu'_{\theta^*} = 0$.  
Pick $x^*,x',a^*,a'$ from the definition of nontrivial, eq.~\eqref{eq:phi-mix-nontrivial-loss}.  In particular, note that $\ell_{x^*}(a^*) < \ell_{x^*}(a')$.
Let $\pi = \mu'$ and $A \in \acts$ such that $A_{\theta^*} = a^*$ and $A_\theta = a'$ for all other $\theta$.  

Now suppose $\acth\in\acts$ has $\ell_{x'}(\acth) > \ell_{x'}(a')$.  Then taking $\mu=\pi$ puts all weights on experts predicting $a'$ while keeping $D_\Phi(\mu,\pi)=0$, so choosing $x=x'$ gives $M(\eta) < 0$ for all $\eta$.  Otherwise, $\ell_{x'}(\acth) = \ell_{x'}(a')$, which by eq.~\eqref{eq:phi-mix-nontrivial-loss} implies $\ell_{x^*}(\acth) \ge \ell_{x^*}(a')$.  Let $\mu^\alpha = \pi + \alpha(\delta_{\theta^*}-\pi)$, where $\delta_{\theta^*}$ denotes the point distribution on $\theta^*$.  Calculating, we have
\begin{align*}
  M(\eta) 
  &= \inner{\mu^\alpha, \ell_{x^*}(A)} + \tfrac 1 \eta D_\Phi(\mu^\alpha,\pi) - \ell_{x^*}(\acth) \\
  &= (1-\alpha)\ell_{x^*}(a') + \alpha \ell_{x^*}(a^*) + \tfrac 1 \eta D_\Phi(\mu^\alpha,\pi) - \ell_{x^*}(\acth) \\
  &\le (1-\alpha)\ell_{x^*}(\acth) + \alpha \ell_{x^*}(a^*) + \tfrac 1 \eta D_\Phi(\mu^\alpha,\pi) - \ell_{x^*}(\acth) \\
  &= \alpha (\ell_{x^*}(a^*) - \ell_{x^*}(\acth)) + \tfrac 1 \eta D_f(\alpha,0),
\end{align*}
where $f(\alpha) = \Phi(\mu^\alpha) = \Phi(\pi + \alpha(\delta_{\theta^*}-\pi))$.  As $\nabla_\pi\Phi$ is bounded, so is $f'(0)$.  Now as $\lim_{\epsilon\to 0} D_f(x+\epsilon,x)/\epsilon = 0$ for any scalar convex $f$ with bounded $f'(x)$ (see e.g.~\cite[Theorem 24.1]{Rockafellar:1997} and~\cite{abernethy2012characterization}), we see that for any $c>0$ we have some $\alpha>0$ such that $D_f(\alpha,0) < c \alpha$.  Taking $c = \eta (\ell_{x^*}(\acth) - \ell_{x^*}(a^*)) > 0$ then gives $M(\eta)<0$.

\subsubsection*{Legendre$\implies$ $\exists$ mixable $\ell$:}

\begin{arfnote}{}
  Needs fixing up still!  I took $a$ to $p$ and $A$ to $P$, which should be remedied.
\end{arfnote}

Assuming $\Phi$ is Legendre, we need only show that some non-constant $\ell$ is
$\Phi$-mixable.
As $\nabla_\pi\Phi$ is infinite on the boundary, $\pi$ must be in the relative interior of $\Delta_\Theta$; otherwise $D_\Phi(\mu,\pi) = \infty$ for $\mu\neq\pi$.

Take $\acts = \DX$ and $\ell(p,x) = \|p-\delta_x\|^2$ to be the 2-norm squared 
loss.  
Now for all $\mu$ in the interior of $\Delta_\Theta$ and 
$P \in \DX^\Theta$, we have 
$\inner{\mu, \ell_x(P)} 
	= \sum_\theta \mu_\theta\|P_\theta-\delta_x\|^2 
	\geq \|\bar p-\delta_x\|^2$ by convexity, 
where $\bar p = \sum_\theta \mu_\theta P_\theta$. 
In fact, as $\mu$ is in the interior, this inequality is strict, 
and remains so if replace $\mu$ by $\mu'$ with $\|\mu'-\mu\|<\epsilon$ 
for some $\epsilon$ sufficiently small.
Now for all $\mu,P$ the algorithm can take $\phat=\bar p$, and we can 
always choose $\eta = \inf_{x,\mu':\|\mu'-\mu\|=\epsilon} D_\Phi(\mu',\mu) / (\epsilon \ell_{\mathrm{max}}) > 0$, so either $\|\mu-\pi\|<\epsilon$ in which case we are fine by the above, or $\mu$ is far enough away that the $D_\Phi$ term dominates the algorithm's loss.  (Here $\ell_{\mathrm{max}}$ is just $\max_{p,x} \ell_x(p)$, which is bounded, and $D_\Phi(\mu',\mu) > 0$ as $\Phi$ is strictly convex.)  So if $\Phi$ is Legendre, squared loss is $\Phi$-mixable.

 \end{document}